
\documentclass[conference]{IEEEtran}

\usepackage{times}

\usepackage[numbers]{natbib}
\usepackage{multicol}
\usepackage{multirow}
\usepackage[bookmarks=true, hidelinks]{hyperref}

\usepackage{siunitx}

\usepackage{amsmath}
\usepackage{amsfonts}
\usepackage{mathtools}
\usepackage{amssymb}
\usepackage{bm}
\usepackage{dsfont}
\usepackage{nicefrac}
\usepackage{accents}
\usepackage{amsthm}
\usepackage{algorithm}
\usepackage{algpseudocode}
\usepackage{xspace}
\usepackage{xcolor}
\usepackage{graphicx}
\usepackage{balance}
\usepackage{booktabs}
\usepackage{wrapfig}
\usepackage{subfigure}
\usepackage[capitalise]{cleveref}
\usepackage{enumerate}
\usepackage{paralist} 
\usepackage{makecell} 
\usepackage{dblfloatfix}

\usepackage{tikz} 
\newtheorem{theorem}{Theorem}
\newtheorem{definition}{Definition}

\setcellgapes{5pt}

\newtheorem{lemma}{Lemma}




\newcommand{\etal}{\emph{et~al.}\xspace}
\newcommand{\setal}{~\emph{et~al.}\xspace}
\newcommand{\eg}{\emph{e.g.,}\xspace}
\newcommand{\ie}{\emph{i.e.,}\xspace}
\newcommand{\myParagraph}[1]{{\bf #1.}\xspace}

\renewcommand{\boldsymbol}[1]{{\bm{#1}}}


\newcommand{\Natural}[1]{ { {\mathbb N}^{#1} } }

\newcommand{\Reals}[1]{ { {\mathbb R}^{#1} } }

\newcommand{\inv}{^{-1}}

\DeclareMathOperator{\Range}{Range}

\newcommand{\at}[1]{^{(#1)}}

\DeclareMathOperator{\ind}{\mathds{1}} 




\newcommand{\vf}{\boldsymbol{f}}

\newcommand{\vv}{\boldsymbol{v}}

\newcommand{\vxx}{\boldsymbol{x}}

\newcommand{\vzz}{\boldsymbol{z}}

\newcommand{\ltrue}{\mathrm{TRUE}} 
\newcommand{\lfalse}{\mathrm{FALSE}} 




\newcommand{\ubar}[1]{\underaccent{\bar}{#1}}


\newcommand\blfootnote[1]{%
  \begingroup
  \renewcommand\thefootnote{}\footnote{#1}%
  \addtocounter{footnote}{-1}%
  \endgroup
}

\newenvironment{edited}{}{}
\newcommand{\edit}[1]{{#1}}

\newcommand{\plausible}{plausible\xspace}
\newcommand{\Plausible}{Plausible\xspace}
\newcommand{\perceived}{perceived\xspace}

\newcommand{\hypgen}{plausible scene generator\xspace}

\newcommand{\HypGen}{Plausible Scene Generator\xspace}
\newcommand{\hypgeneration}{\plausible scene generation\xspace}
\newcommand{\HypGeneration}{\Plausible Scene Generation\xspace}

\newcommand{\statespace}{\mathbb{X}}
\newcommand{\statevar}{\vxx}

\newcommand{\faults}{\vf} 

\newcommand{\hypdistr}{\zeta}
\newcommand{\riskfcn}{\mathcal{R}}
\newcommand{\copula}{C}
\newcommand{\costth}{\theta}
\newcommand{\riskth}{\gamma}

\newcommand{\TaskAware}{Task-Aware\xspace}
\newcommand{\Taskaware}{Task-aware\xspace}
\newcommand{\taskaware}{task-aware\xspace}

\newcommand{\Taskrelevant}{Task-relevant\xspace}
\newcommand{\taskrelevant}{task-relevant\xspace}


\newcommand{\lowbound}[1]{\ubar{#1}}
\newcommand{\upbound}[1]{\bar{#1}}

\newcommand{\datasetURL}{https://anonymous.4open.science/w/task-relevant-risk-estimation/}

\pdfinfo{
   /Author (Pasquale Antonante)
   /Title  (Task Aware Risk Estimation of Perception Failures for Autonomous Vehicles)
   /CreationDate (D:20230233151559)
   /Subject (Autonomous Vehicles Safety)
   /Keywords (Autonomous Vehicles;Safety;Perception Failure;Risk Estimation)
}

\begin{document}

\thanks{Manuscript received January 20, 2002; revised August 13, 2002. }

\title{ \TaskAware Risk Estimation of \\ Perception Failures for Autonomous Vehicles}




%
\author{
    \authorblockN{
        Pasquale Antonante\authorrefmark{1},
        Sushant Veer\authorrefmark{2},
        Karen Leung\authorrefmark{2}\authorrefmark{3}, 
        Xinshuo Weng\authorrefmark{2},
        Luca Carlone\authorrefmark{1}, and
        Marco Pavone\authorrefmark{2}\authorrefmark{4}
    }
    \authorblockA{
        \authorrefmark{1}
        Massachusetts Institute of Technology 
    }
    \authorblockA{
        \authorrefmark{2}
        NVIDIA Research 
    }
    \authorblockA{
        \authorrefmark{3}
        University of Washington
    }
    \authorblockA{
        \authorrefmark{4}
        Stanford University
    }
}

\maketitle
\blfootnote{This work was done while Pasquale was an intern at NVIDIA Research.}

\begin{tikzpicture}[overlay, remember picture]
  \path (current page.north east) ++(-3.75,-0.2) node[below left] {
    This paper has been published in the Proceedings of Robotics: Science and Systems (RSS) 2023.
  };
\end{tikzpicture}
\begin{tikzpicture}[overlay, remember picture]
  \path (current page.north east) ++(-3.85,-0.6) node[below left] {
    Please cite the paper as: P. Antonante, S. Veer, K. Leung, X. Weng, L. Carlone and M. Pavone,
  };
\end{tikzpicture}
\begin{tikzpicture}[overlay, remember picture]
  \path (current page.north east) ++(-0.5,-1) node[below left] {
    ``Task-Aware Risk Estimation of Perception Failures for Autonomous Vehicles'', \emph{Proceedings of Robotics: Science and Systems (RSS)}, 2023.
  };
\end{tikzpicture}
\vspace{-2.5em}

\begin{abstract}
Safety and performance are key enablers for autonomous driving: on the one hand we want our autonomous vehicles (AVs) to be safe, while at the same time their performance (\eg comfort or progression) is key to adoption.
To effectively walk the tight-rope between safety and performance, AVs need to be risk-averse, but not entirely risk-avoidant. 
To facilitate safe-yet-performant driving, in this paper, we develop a \emph{\taskaware} risk estimator that assesses the risk a perception failure poses to the AV's motion plan.
If the failure has no bearing on the safety of the AV's motion plan, then regardless of how egregious the perception failure is, our \taskaware risk estimator considers the failure to have a low risk; on the other hand, if a seemingly benign perception failure severely impacts the motion plan, then our estimator considers it to have a high risk.
In this paper, we propose a \taskaware risk estimator to decide whether a safety maneuver needs to be triggered.
To estimate the \taskaware risk, first, we leverage the perception failure ---detected by a perception monitor--- to synthesize an alternative \plausible model for the vehicle's surroundings.
The risk due to the perception failure is then formalized as the ``relative" risk to the AV's motion plan between the \perceived and the alternative \plausible scenario.
We employ a statistical tool called \emph{copula}, which models tail dependencies between distributions, to estimate this risk. The theoretical properties of the copula allow us to compute probably approximately correct (PAC) estimates of the risk.
We evaluate our \taskaware risk estimator using NuPlan and compare it with established baselines, showing that the proposed risk estimator achieves the best $F1$-score (doubling the score of the best baseline) and exhibits a good balance between recall and precision, \ie a good balance of safety and performance.
\end{abstract}
\IEEEpeerreviewmaketitle


\section{Introduction}


Despite the fast-paced progress in robotics and autonomous systems, 
perception modules in autonomous vehicles (AVs) still encounter a spate of failure modes (e.g., misclassification or misdetection of objects, ghost obstacles, out-of-distribution (OOD) objects, etc.), which can compromise the safety of passengers, other drivers, and pedestrians.
Consequently, the problem of developing detectors for such perception failures has recently gained traction~\cite{antonante21iros-perSysMonitoring, sever22-hjreachability, Miller22arxiv-falseNegativeObjectDetectors, bogdoll22cvpr-anomalyAD}.
However, these failures occur frequently enough that reverting to a fallback safety maneuver for each such detection is prohibitively detrimental to the performance of the AV. 
In this paper, we work towards developing a \emph{\taskaware perception monitor} that only triggers when the perception failure poses a significant risk to the AV's motion plan, thereby, promoting safe yet performant driving; an example highlighting the importance of developing a \taskaware perception monitor, which focuses on task-relevant perception failures, 
is illustrated in \cref{fig:task-relevant-perception}.

We envision a \taskaware perception monitor that embodies three main components, as shown in \cref{fig:anchor}. 
First, the \emph{perception failure detection and identification module} identifies perception faults and isolates the responsible modules and failure modes.
Second, the \emph{\hypgen} leverages the knowledge of the perception failure modes, provided by the failure identification, to construct a probabilistic (possibly multi-modal) description of plausible alternative models for the AV's surroundings that supports the actual world scene.\footnote{The probabilistic description of \plausible AV surroundings might be highly stochastic and multi-modal. Planning in the \plausible scene would be impractical and possibly not conducive to a good plan; however, we can still leverage it to estimate the risk of the perception failures to the AV using the approach we develop in this paper.}
Finally, a \emph{\taskaware risk estimator} assesses the increased risk to the AV's motion plan due to the perception failure.
There is a plethora of recent work on perception failure detection~\cite{antonante21iros-perSysMonitoring,ramanagopal2018failing,Miller22arxiv-falseNegativeObjectDetectors,bogdoll22cvpr-anomalyAD}, and also some work on \hypgeneration \cite{christianos22arxiv-planning,itkina22icra-occlusionInference,Antonante22arxiv-perceptionMonitoring}, comparably much less work on \taskaware risk estimation. 
The primary subject of this paper is, indeed, the development of the \taskaware risk estimator. 

\begin{figure}[t]
  \centering
  \subfigure[\Taskrelevant failure]{
    \label{fig:task-relevant}\includegraphics[width=0.45\columnwidth]{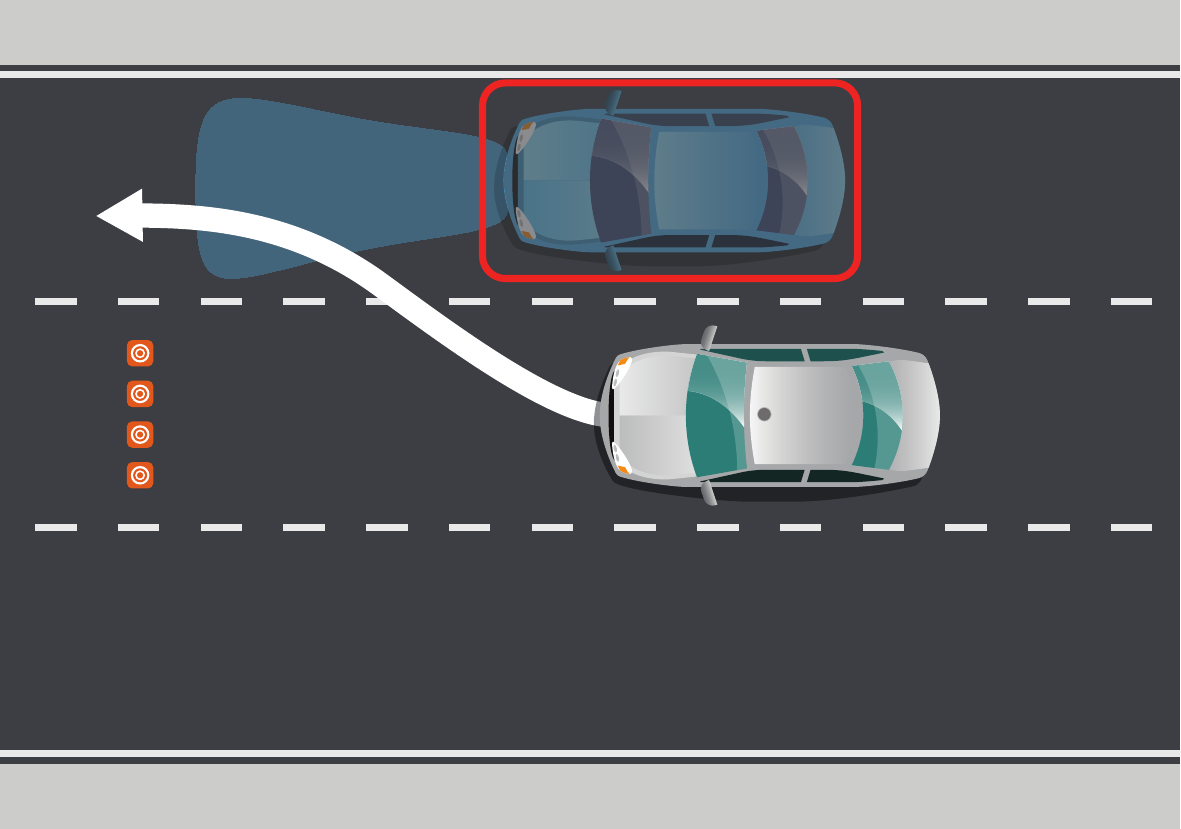}
  }
  \subfigure[Non-\taskrelevant failure]{
    \label{fig:non-task-relevant}\includegraphics[width=0.45\columnwidth]{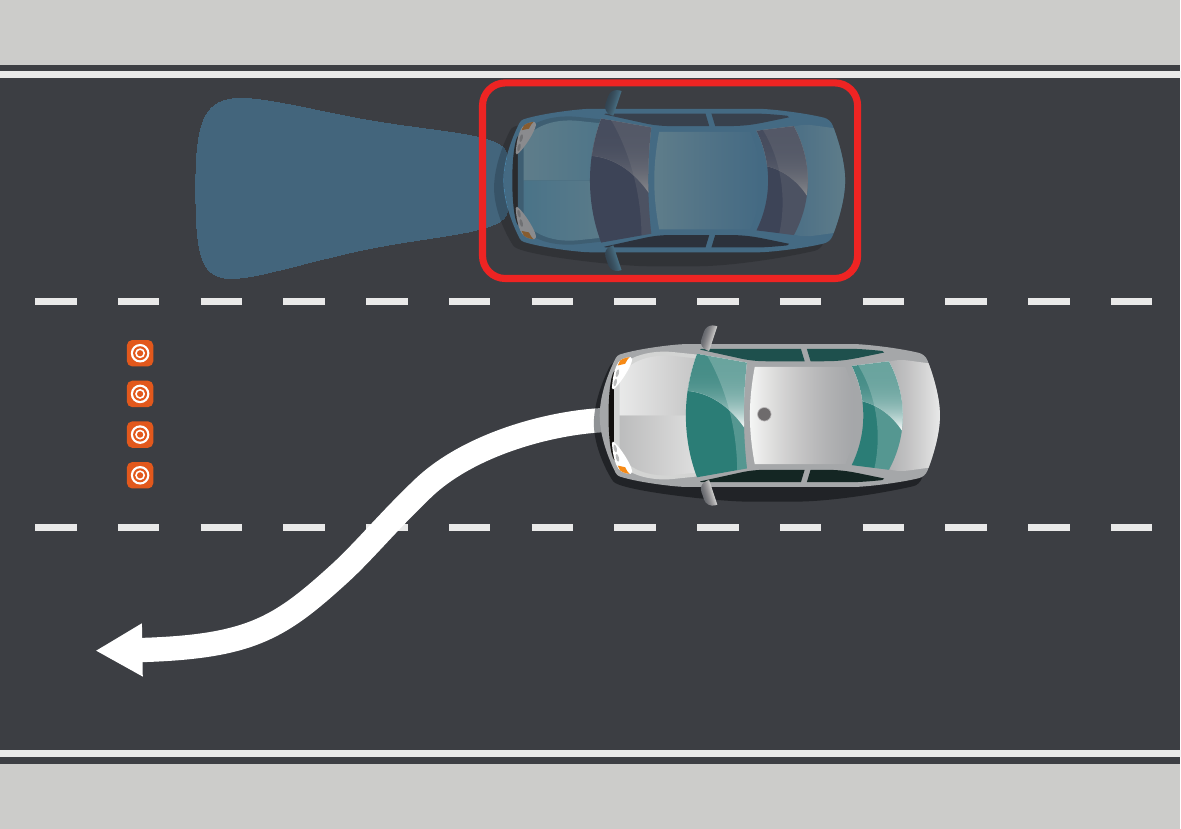}
  }
  \caption{
    {\bf Illustration of \taskaware perception failure detection.} 
    The white car is the ego vehicle and the blue car is an external (non-ego) vehicle.
    In this example, the non-ego vehicle has not been detected by the perception system of the ego vehicle. Then, \cref{fig:task-relevant} depicts a \taskrelevant missing obstacle, as the ego vehicle's motion plan will likely collide with the non-ego vehicle due to the misdetection. 
    \cref{fig:non-task-relevant} depicts a non-\taskrelevant missing vehicle, as the ego vehicle's motion plan will not lead to a collision with the non-ego vehicle, regardless of the perception failure.
  }
  \label{fig:task-relevant-perception}
  \vspace{-5mm}
\end{figure}


\begin{figure*}[htbp]
  \centering
  \includegraphics[width=\textwidth]{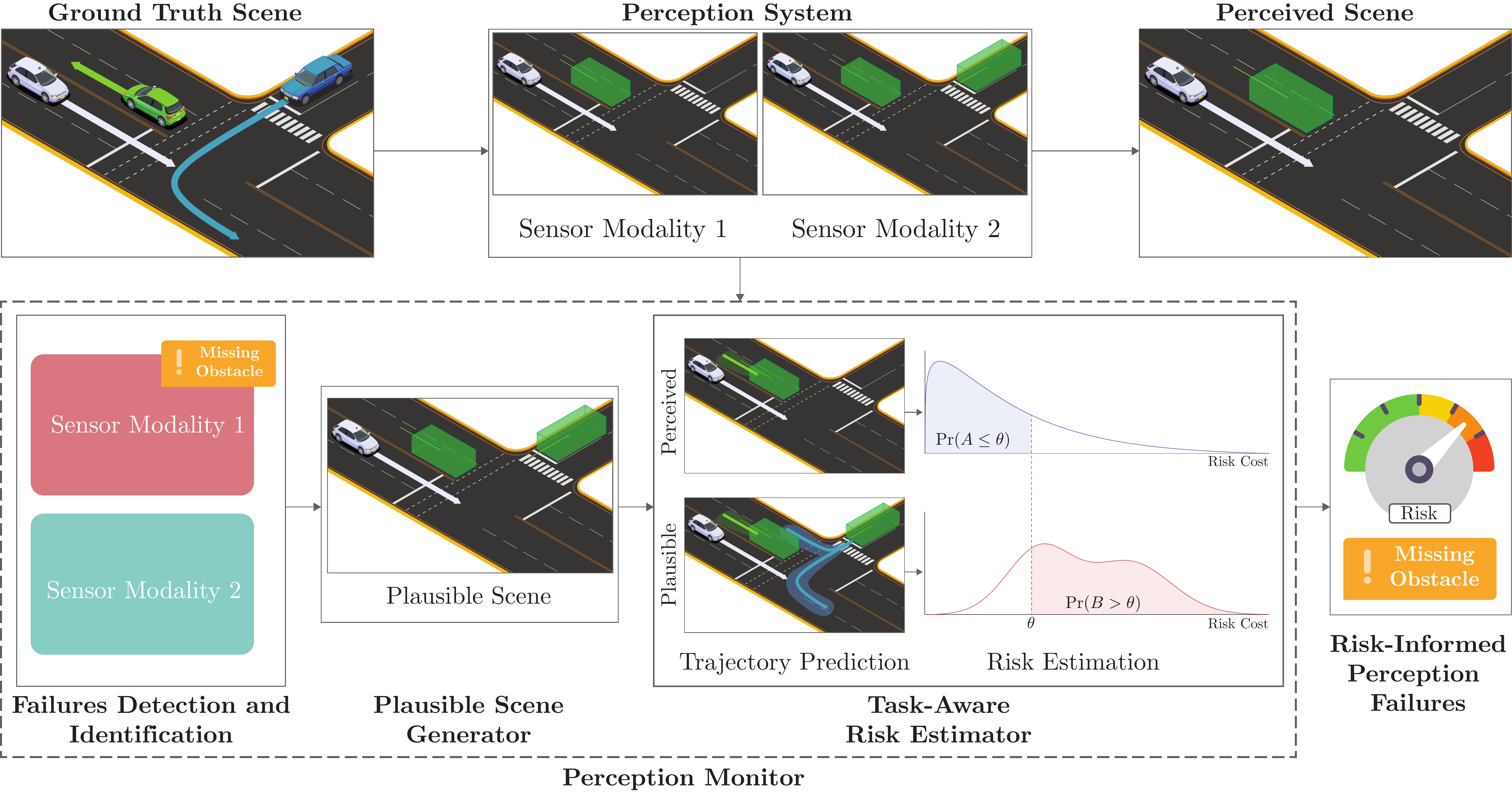}
  \vspace{-6mm}
  \caption{{\bf \Taskaware perception monitor overview.} 
  The scene contains the ego vehicle (white car) and two non-ego agents (green and blue car). 
  The top row shows a scenario in which a perception system fails to detect an obstacle (the blue car):
  one of the two sensor modalities used by the perception system is not able to detect the obstacle (top-center subfigure), inducing a missing-obstacle failure in the perception output (top-right subfigure). 
  The bottom row depicts the proposed task-aware perception monitor.
  The failure detection and identification module detects that sensor 1 is failing (for example using spatio-temporal information).
  The \hypgen, uses the information about the active failures, generates a \plausible scene from the \perceived scene.
  Finally, the task-aware risk estimator computes the risk associated with the failure.
    The shaded (green and blue) regions in the bottom-row scenes represent the uncertainty in the trajectories, as computed by the non-ego trajectory prediction module.
    The possible trajectories induce a distribution of risk costs for each scene, which are used to estimate the risk associated with a perception failure.
    If the risk in the \plausible scene is significantly higher than the risk in the \perceived scene, we detect the failure as task relevant.
    Our detector uses a statistical tool called copula to estimate the tail dependency between the two cost distributions.
  }
  \label{fig:anchor}
  \vspace{-3mm}
\end{figure*}

The \taskaware risk estimator we develop in this paper compares the risk to the AV's motion plan in the \perceived scenario with the one in the generated \plausible scenarios.
The risk posed to the AV's motion plan in both the scenes (\perceived and \plausible) is expressed as a probability distribution on a risk metric, \eg time-to-collision. 
We introduce the notion of \emph{relative scenario risk} (RSR), which measures the probability that the \plausible scene has a high risk when the \perceived scene does not. 
To empirically estimate RSR, we employ the statistical tool called \emph{copula}~\cite{nelsen07boo-introToCopulas}, which models tail dependencies between distributions, and we provide probably approximately correct (PAC) bounds on the RSR estimate. 
Finally, we provide a detection algorithm based on the RSR PAC bounds that, with high probability, triggers an alarm when faced with high-risk \taskrelevant failures.


%
%

\myParagraph{Statement of Contributions} Our contributions in this paper are as follows: 
(i) We formalize the notion of \emph{relative scenario risk} (RSR), which underlies our \taskaware risk estimation;
(ii) We develop an algorithm to estimate RSR at runtime by leveraging the copula and also provide probabilistic guarantees on the correctness of our estimation; 
(iii) Finally, we demonstrate the efficacy of our framework by comparing our method with prior approaches on a dataset of \num{100} realistic perception failure scenarios created in NuPlan~\cite{nuplan}. 
We show that our risk estimator achieves the highest F1 score, exhibits a good balance of precision and recall, and anticipates collisions with sufficient time to allow mitigation measures.
\edit{The code is available at \url{https://github.com/NVlabs/persevere}}.


\section{Related Work}\label{sec:related_work}

We discuss prior art for all of the three stages of our perception monitoring scheme, \ie perception failure detection, \hypgeneration, and task-relevant risk estimation.

\myParagraph{Perception Failure Detection and Identification}
Autonomous vehicles rely on onboard \emph{perception systems} to provide situational awareness and inform the onboard decision-making and control systems.
Reliability of the perception system is critical for safe operation of AVs.
While it is desirable for the perception system to be fault-free under any conditions, it is hard to guarantee it~\cite{bogdoll22cvpr-anomalyAD};
therefore, detection and identification of failures in the perception system at runtime have gained increasing attention.
The problem of fault detection and identification is studied in~\cite{Antonante22arxiv-perceptionMonitoring} where the authors proposed a system-level framework for online monitoring of the perception system of an AV. 
Besides failure detection, the framework in \cite{Antonante22arxiv-perceptionMonitoring} also identifies, at runtime, the failure modes that the system is experiencing, from an a priori known list of failures.
Other approaches in the literature include spatio-temporal information from motion prediction to assess 3D object detection \cite{You21-temporalCheckLiDAR}, Timed Quality Temporal Logic (TQTL) to reason about desirable spatio-temporal properties of a perception algorithm \cite{Balakrishnan19date-perceptionVerification, Balakrishnan21-percemon}, or detect anomalies by placing logical-constraint model assertions \cite{Kang18nips-ModelAF}.
Since perception systems are composed of multiple modules, failure detection for specific submodules has also received attention.
Previous works focused on object detection~\cite{Miller22arxiv-falseNegativeObjectDetectors}, semantic segmentation~\cite{Besnier21cvf-triggeringFailures, Rahman22ral-FSNet}, localization~\cite{Jing22tits-GPSintegrity, Hafez20ral-integrityMonitoring}, out-of-distribution (OOD) detection \cite{Sharma21uai-scod,hendrycks2019scaling}, or changes in high-definition map~\cite{lambert21neurips-trustButVerify}.
All of the above works focus on detecting and identifying failures in the perception system, but do not assess their impact on the AV's motion plan.

\myParagraph{\HypGeneration} 
While there is limited prior work on explicit \plausible scene generation, many works in the literature reason about plausible alternative scenes in order to detect or avoid failures.
Indeed, failure detection methods often use spatio-temporal inconsistencies between different sensor modalities, where each modality implicitly proposes a \plausible scene.
You\setal~\cite{You21-temporalCheckLiDAR} use historical information from a number of previous 3D scenes to predict a \plausible scene, that is then compared with the \perceived scene to detect errors.
Similarly, \cite{Liu21tdsc-detectingPerceptionError} correlates camera images and LIDAR point clouds to detect missing (or ghost) obstacles.
Beside obstacle detection, previous work also tackled the mislocalization error. Li\setal~\cite{li18itiv-mapDeadReckoning} use particle filters to retain multiple likely positions of the AV.
Futhermore, the literature on planning under occlusions contains both model-based~\cite{galceran15iros-augmentedSceneOcclusion, koschi20tiv-setBasedOcclusion, zechel19irc-pedestrianOcclusion} and learning based approaches~\cite{christianos22arxiv-planning,itkina22icra-occlusionInference,han21corl-planning}.
These works augment the scene to include possible missing (occluded) obstacles in the scene.
The same techniques can be used to generate \plausible scenes whenever the failure detection does not provide enough information about the \plausible scene.

\myParagraph{Risk Assessment}
There are several risk assessment techniques in the literature~\cite{dahl18tiv-collisionAvoidance, westhofen23-criticalityMetrics}.
One approach to ``measure" the risk is to monitor the deterioration of the cost of the motion plan, as was done in \cite{farid2022failure,farid2022task,farid22arxiv-predictionAnomaly}. 
Another approach is to use a \emph{criticality metric}, such as Time To Collision (TTC), which computes the time before a collision happens between two actors if their speeds and orientations remain the same. There is a large variety of criticality metrics in the literature, and the interested reader is referred to~\cite{westhofen23-criticalityMetrics} for a comprehensive overview; however,
these metrics, in general, only assess whether a scenario is dangerous while assuming that the inferred scene from perception is correct. 
A recent approach uses a neural network to classify the risk level using labeled data \cite{agarwal2022risk}. 
However, learning-based methods suffer from lack of OOD robustness, and are usually less interpretable and lack guarantees. 
Other works on perception-aware risk assessment, such as \cite{bansal2021risk}, which proposes risk-ranked recall for object detection systems, and \cite{bernhard2022risk}, which develops perception-uncertainty-based risk envelopes, do not reason about the future actions of other agents. 
Topan \etal \cite{sever22-hjreachability} do account for the future actions of other agents for perception failures by computing ``inevitable'' collision sets (ICS) via Hamilton-Jacobi (HJ) reachability analysis~\cite{mitchell05tac-HJReachability}, and flagging a situation as unsafe if another agent is close to entering the ICS~\cite{leung20ijrr-HJReachability, hsu22arxiv-sim2real,hu22ral-sharp}. 
However, \cite{sever22-hjreachability} assumes the worst-case scenario will occur, \ie the AV and other road agents will try to collide with each other, and does not consider the interactions between multiple agents, resulting in over-conservatism. 
In this paper, we estimate the risk to the AV's motion plan in the presence of a perception failure while accounting for future actions of other agents  by leveraging a trajectory prediction network \cite{salzmann20eccv-trajectron}. 
Furthermore, our approach is accompanied with PAC bounds and is run-time capable. 



\section{\TaskAware Perception Monitors: Overview and Risk Estimation Formulation}\label{sec:problem-formulation}

This section provides an overview of the building blocks of our \taskaware perception monitor, which comprises of three components: perception failure detection and identification, \hypgen, and \taskaware risk estimator. Moreover, the section formalizes the problem of \taskaware risk estimation, which is the main focus of this paper.
Throughout the rest of the paper we will refer to the AV as the \emph{ego vehicle} while any other agent is referred to as a \emph{non-ego agent}.

\myParagraph{Perception Failure Detection and Identification}
We assume access to a perception failure detection and identification module, such as the algorithms presented in \cite{Antonante22arxiv-perceptionMonitoring}, that identifies the set of failure modes the perception system is experiencing.
The perception system is composed of a set of modules, each of which is responsible for a specific task, e.g., object detection, localization, etc.
Each module is subject to a finite set of failure modes.
The perception failure detection and identification module computes a failure state vector $\faults$, containing the relevant information about the active failures, that is, the set of active failure modes and the corresponding perception diagnostic information (such as intermediate detection results, raw sensor data, etc.).

\myParagraph{\HypGen} 
Before we assess the risk that the perception failure poses to the AV's motion plan, we need to understand the actual scene in which the AV is operating.
To this end, we construct a new estimate of the surrounding scene using what we call a \emph{\hypgen}. 
Let $\statevar_t\in\statespace\subseteq \mathbb{R}^n$ be an estimate of the world state at time $t$, provided by the perception module (this is the ``perceived scene'').
We assume that the world state comprises the ego vehicle's state $\statevar^{\rm e}$, non-ego agents' states $\statevar^{\rm ne}$, and map attributes $\statevar^{\rm m}$, e.g., lane lines, stop signs, traffic signals, etc.
Given the \perceived world state $\statevar_t$ from the perception module, and the active perception failure modes information $\faults$ from the perception failure detection and identification, the \hypgen returns alternative \plausible scenes in the form of a probability distribution $\hypdistr(\hat{\statevar}_t|\statevar_{0:t},\faults)$ over the \plausible world states $\hat{\statevar}_t\in\statespace$ at time $t$; we require the \hypgen to \emph{support the actual world state}.

While there are some approaches that can be used for scenario generation \cite{Antonante22arxiv-perceptionMonitoring,You21-temporalCheckLiDAR,christianos22arxiv-planning,itkina22icra-occlusionInference, han21corl-planning}, the approach we adopt here is based on the perception failure identification method in~\cite{Antonante22arxiv-perceptionMonitoring}.
In a nutshell, the method in~\cite{Antonante22arxiv-perceptionMonitoring} detects inconsistencies between intermediate perception results and infers the set of faults that could have caused such inconsistencies.\footnote{The method~\cite{Antonante22arxiv-perceptionMonitoring} can also integrate more complex tests, including logic formulas, mathematical certificates of correctness, and a priori bounds.}
For instance, consider the case where the radar-based detection module detects an obstacle in front of the ego-vehicle, but the same obstacle is missed by the camera-based detection module: in this case the perception system might prioritize camera detection and discard the radar detection as a false detection; therefore, the \perceived scene would have no obstacle in front of the ego-vehicle.
However, the approach in~\cite{Antonante22arxiv-perceptionMonitoring} can detect the inconsistency between the radar and the camera and, making spatio-temporal considerations, can infer that the radar detection is indeed correct.
The \hypgen can then use the information about the failure to generate an alternative \plausible scene that contains an obstacle as detected by the radar.
In general, the alternative scene may not be unique. 
For example, since the radar-based detection module may only be able to detect the position and the velocity of the missing obstacle, but not its class (e.g., car, pedestrian, etc.),
such a detected failure may give rise to a \textit{probability distribution} over \plausible scenes
$\hypdistr(\hat{\statevar}_t|\statevar_{0:t},\faults)$ (e.g., the uniform distribution over the missing obstacle's class). Similarly, this distribution $\hypdistr(\hat{\statevar}_t|\statevar_{0:t},\faults)$ can also model uncertainty in the radar position and velocity measurements.
Assuming that at least a module in the perception pipeline computed the correct result, the distribution of generated scenes will support the actual scene.\footnote{This approach can also be generalized to early-fusion and middle-fusion perception systems as spatio-temporal information across frames can provide useful diagnostic information regardless of the perception architecture.}
In this paper, we do not assume a particular implementation of the \hypgen, which might depend on the perception system architecture, but we rather focus on the risk assessment.
We only require the \plausible scene distribution to support the actual scene, which has been shown to be possible for the most common perception failure modes~\cite{Antonante22arxiv-perceptionMonitoring, You21-temporalCheckLiDAR, christianos22arxiv-planning, Liu21tdsc-detectingPerceptionError, galceran15iros-augmentedSceneOcclusion} (as discussed in~\cref{sec:related_work}).

\myParagraph{Relative Scenario Risk} We are interested in understanding how much more risk does the ego's motion plan encounter in the generated \plausible scenes $\hypdistr(\hat{\statevar}_t|\statevar_{0:t},\faults)$ compared to the \perceived scene $\statevar_{0:t}$.
Let $\statevar^{\rm e}_{t:t+T}$ be the ego's motion plan generated by a planning module for a time horizon $T$. 
We assume the availability of a trajectory prediction module which provides a distribution $\psi(\statevar_{t:t+T}|\statevar_{0:t},\statevar^{\rm e}_{t:t+T})$ on the future world state trajectories conditioned on the world state history and the ego's motion plan. 
\begin{edited}
  The trajectory predictor $\psi$ reasons about agent interactions and provides multimodal predictions that account for multiple agent intentions; in our experiments we use Trajectron++~\cite{salzmann20eccv-trajectron} which is a state-of-the-art trajectory prediction model that satisfies these criteria.
\end{edited}
The approach we describe is agnostic to the choice of the planning and prediction modules.
Let $c:\statespace\to\Reals{}^+$ be a \emph{cost function} such that higher values imply riskier scenarios for the ego vehicle.
Examples of such functions might be the distance between the ego vehicle and the closest non-ego agent or a surrogate (so that higher values imply smaller times) of the time-to-collision metric~\cite{westhofen23-criticalityMetrics}. 
The distribution $\psi$ on the future world states $\statevar_{t:t+T}$ induces a sequence of univariate distributions $\{\phi_{t+\tau}(c_{t+\tau}|\statevar_{0:t},\statevar^{\rm e}_{t:t+T})\}_{\tau=1}^T$ over the predicted costs $c$ for each time step in the future. 
In the rest of the paper, we will work with the predicted cost distribution $\phi_{t+\tau}$ for a particular $\tau$ and for a particular motion plan $\statevar^{\rm e}_{t:t+T}$. 
For the sake of notational compactness, we drop the explicit dependence of $\phi$ on $t+\tau$ and $\statevar^{\rm e}_{t:t+T}$ to express the predicted cost distribution as $\phi(c|\statevar_{0:t})$. 
Similarly, the \plausible scene distribution $\hypdistr(\hat{\statevar}_t|\statevar_{0:t},\faults)$ on the \plausible world state $\hat{\statevar}_t$ induces the cost distribution $\phi\left(c|\statevar_{0:t},\faults\right)$.

We are now ready to formalize our \taskaware notion of risk in the following definition.
\begin{definition}[Relative Scenario Risk (RSR)]\label{def:relative-scenario-risk}
  Let $\statevar_{0:t}$ be the world state history for the \perceived scene and let $\hypdistr(\hat{\statevar}_t|\statevar_{0:t},\faults)$ be the distribution of the generated \plausible scenes due to the perception faults $\faults$, detected by a perception failure detection and identification module. 
  Let $\phi_A:=\phi(c|\statevar_{0:t})$ be the distribution of the costs for the \perceived scene and $\phi_B:=\phi\left(c|\statevar_{0:t},\faults\right)$ be the distribution of the costs for the \plausible scenes. 
  Let $\costth\in\mathbb{R}^+$ be the cost threshold that the planner desires to stay below. 
  The \emph{relative scenario risk} (RSR) between the \plausible and the \perceived scenes is then defined as:
  \begin{align}\label{eq:RSR}
    \hat{\riskfcn}: \costth \mapsto \underset{A\sim\phi_A, B\sim\phi_B}{\Pr}(B > \costth \mid A \leq \costth).
  \end{align}
\end{definition}
For a given $\theta$, the higher the RSR is, the further the \plausible scene cost distribution $\phi_B$ is skewed towards higher costs, as illustrated in~\cref{fig:anchor}.
Hence larger values of $\hat{\riskfcn}$ imply that the \plausible scenes are riskier than the \perceived scene. 
Note that, in the general, $\phi_A$ and $\phi_B$ will not be independent as the underlying scene is largely the same.


The choice of $\theta$ defines a desired safety threshold: for instance, if the cost is the distance between agents, $\theta$ can be the smallest acceptable distance between the ego and the nearest agent. 
The choice of $\theta$ may be scenario dependent (e.g., the minimum distance might be different when driving on a highway vs. a traffic jam).
To overcome this dependency, we take $\costth$ to capture the bulk of the probability mass in the \perceived scene cost distribution ($\phi_A$).
To make this more concrete, let $\Phi_A$ be the marginal cumulative distribution function (CDF) of $\phi_A$ and $\Phi_B$ be the CDF of $\phi_B$. Recall that the generalized inverse of a CDF $\Phi$, here denoted by $\Phi\inv$, is defined as:
\begin{equation}\label{eq:phi-inv}
  \Phi\inv(p) := \inf \left\{ c\in\Reals{} \>:\> \Phi(c) \geq p \right\}.
\end{equation}
Then, we choose $\costth=\Phi_A\inv(p)$.
This is equivalent to taking $\costth$ to be the maximum value of the risk cost, among the most common situations in the \perceived scene.
We call $p$ the \emph{risk aversion} parameter as it denotes the amount of risk the ego agent is willing to accept in its motion plan.

We can now define the $p$-quantile relative scenario risk.
\begin{definition}[$p$-quantile Relative Scenario Risk]\label{def:p-RSR}
    Let $\hat{\riskfcn}$ be the relative scenario risk in \cref{def:relative-scenario-risk}.
    Let $p\in(0,1)$ be the risk aversion parameter described above.
    Then, the $p$-quantile relative scenario risk ($p$-RSR) is defined as:
    \begin{align}\label{eq:p-RSR}
        \riskfcn:p \mapsto \hat{\riskfcn}\circ \Phi_A\inv(p) .
    \end{align}
\end{definition}
Note that the definition above is simply restating~\cref{def:relative-scenario-risk} in terms of the risk aversion parameter $p$.

\myParagraph{Problem Statement} Given the \perceived world state history $\statevar_{0:t}$, the perception module fault modes $\faults$, and risk aversion $p$, we want to estimate the $p$-quantile relative scenario risk $\riskfcn(p)$ in \cref{def:p-RSR}.
It is worth pointing out that this is a challenging problem because the distributions $\phi_A$ and $\phi_B$ are not independent, and we do not have an explicit analytical representation for them or their CDFs $\Phi_A$, $\Phi_B$; hence, we cannot analytically compute $\Phi_A\inv(p)$ or $\riskfcn(p)$.
However, we can sample from these distributions independently.
In particular, we take samples from these distributions in a way that they are independent (any new sample does not depend on the previous) and identically distributed (the underlying scene and behavior of the agents is fixed).
In the next section, we present a method to estimate $\riskfcn(p)$ using these samples.


\section{Task-Aware Risk Estimation}

In this section, we use \emph{copulas}~\cite{joe14book-copulas}, a statistical tool used to model tail dependencies between distributions, to provide an algorithm to estimate the RSR defined  in \cref{def:p-RSR}.

\subsection{Introduction to Copulas}
To model the dependency between the two univariate distributions $\phi_A$ and $\phi_B$, we use the concept of \emph{copula} (for a more extensive introduction see~\cite[Chapter 1]{joe14book-copulas}). 
Copulas are tools for modelling dependence of several random variables, and the name ``copula” was chosen to emphasize the manner in which a copula couples a joint distribution function to its univariate marginals.
We make this mathematically precise in the following definition.
\begin{definition}[Copula~\cite{nelsen07boo-introToCopulas}]\label{def:copula}
  A $d$-dimensional copula $\copula:[0,1]^d\to[0,1]$ is a function defined on a $d$-dimensional unit cube $[0,1]^d$ that satisfies the following:
  \begin{enumerate}
    \item $\copula(u_1,\ldots,u_{i-1},0,u_{i+1},\ldots,u_d) = 0$ for any $u_i$, $i\in\{1,\ldots,d\}$,
    \item $\copula(1,\ldots,1,u,1,\ldots,1) = u$ for any $u\in[0,1]$ in any position, and
    \item $\copula$ is $d$-non-decreasing.\footnote{
    That is, for each hyper-rectangle $B=\prod_{i=1}^d[x_i,y_i]\subseteq[0,1]^d$, the C-volume of $B$ is non-negative: $\int_B dC(u) = \sum_{z\in\prod_{i=1}^d \{x_i,y+i\}} (-1)^{N(\vzz)}C(\vzz) \geq 0$, where $N(\vzz)=\#\{k:z_k=x_k\}$}
  \end{enumerate}
\end{definition}
These three properties ensure that the copula behaves like a joint distribution function.
To gain intuition, consider each $u_i$ to be a probability in the range $[0,1]$.
The first condition says that if the probability of the event associated to $u_i$ is zero, then, regardless of the probability of the other events, the joint probability of all events happening at the same time is zero.
Conversely, if all events are sure to occur except one, then the probability of the joint event is the probability of the single non-sure event.
Finally, the last condition  imposes the copula to be non-decreasing in each component.

Sklar's theorem~\cite{sklar59-copula}, presented next, provides the theoretical foundation for the application of copulas
together with the conditions for the existence (and uniqueness) of the copula.
Note that in this paper, we only require the existence of the copula, but we include the uniqueness conditions as well below for the sake of completeness.
\begin{theorem}[Sklar's theorem~\cite{sklar59-copula}]\label{thm:sklar}
  Let $\Phi(x_1,\ldots,n_d)$ be a joint distribution function, and let $\Phi_i$, $i=1,\ldots,d$ be the marginal distributions.
  Then, there exists a copula $\copula:[0,1]^d\to[0,1]$ such that for all $x_1,\ldots,x_d$ in $[-\infty,+\infty]$
  \begin{equation}\label{eq:sklar}
    \Phi(x_1,\ldots, x_d) = \copula(\Phi_1(x_1), \ldots, \Phi_d(x_d)).
  \end{equation}
  Moreover, if the marginals are continuous, then $\copula$ is unique; otherwise, $\copula$ is uniquely determined on $\Range\Phi_1 \times \cdots \times \Range\Phi_d$ where $\Range\Phi_i$ denotes the range (image) of $\Phi_i$.
\end{theorem}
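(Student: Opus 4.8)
The plan is to separate existence from uniqueness, and to handle the continuous-marginal case first since it already contains the essential construction. When every $\Phi_i$ is continuous I would \emph{define} the candidate copula directly through the generalized inverse of~\eqref{eq:phi-inv},
\begin{equation*}
  \copula(u_1,\ldots,u_d) := \Phi\big(\Phi_1\inv(u_1),\ldots,\Phi_d\inv(u_d)\big),
\end{equation*}
and then verify the three axioms of \cref{def:copula}. Groundedness follows because $\Phi_i\inv(0)=-\infty$ forces a marginal, hence $\Phi$, to vanish; the uniform-margin property uses the identity $\Phi_i(\Phi_i\inv(u))=u$, which holds precisely because $\Phi_i$ is continuous; and $d$-non-decreasingness is inherited from the non-negativity of the box masses assigned by the joint CDF $\Phi$. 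To recover \eqref{eq:sklar} I would substitute $u_i=\Phi_i(x_i)$ and invoke the elementary lemma that $\Phi(a_1,\ldots,a_d)=\Phi(b_1,\ldots,b_d)$ whenever $\Phi_i(a_i)=\Phi_i(b_i)$ for every $i$ (two points with equal marginal values differ by a set of zero joint mass); this neutralizes the fact that $\Phi_i\inv(\Phi_i(x_i))$ need not equal $x_i$ on flat stretches of $\Phi_i$.

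For general (possibly discontinuous) marginals I would use the distributional-transform construction. Realize a random vector $(X_1,\ldots,X_d)$ with joint law $\Phi$ on some probability space, adjoin independent variables $V_1,\ldots,V_d$ that are i.i.d.\ uniform on $[0,1]$, and set
\begin{equation*}
  U_i := \Phi_i(X_i^-) + V_i\big(\Phi_i(X_i)-\Phi_i(X_i^-)\big),
\end{equation*}
where $\Phi_i(X_i^-)$ denotes the left limit. The randomization $V_i$ spreads each atom of $X_i$ uniformly across the jump of $\Phi_i$, so that $U_i$ is exactly uniform on $[0,1]$ and $\Phi_i\inv(U_i)=X_i$ almost surely. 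Taking $\copula$ to be the joint CDF of $(U_1,\ldots,U_d)$ yields a copula, and \eqref{eq:sklar} drops out of the quantile equivalence $\{\Phi_i\inv(U_i)\le x_i\}=\{U_i\le\Phi_i(x_i)\}$ together with $\Pr(X_1\le x_1,\ldots,X_d\le x_d)=\Phi(x_1,\ldots,x_d)$. An equivalent purely analytic route defines $\copula$ on $\Range\Phi_1\times\cdots\times\Range\Phi_d$ by $\copula(\Phi_1(x_1),\ldots,\Phi_d(x_d)):=\Phi(x_1,\ldots,x_d)$ and then extends it to $[0,1]^d$ by multilinear interpolation.

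For uniqueness I would note that \eqref{eq:sklar} forces $\copula$ to equal the prescribed values on the product of ranges $\Range\Phi_1\times\cdots\times\Range\Phi_d$, which already delivers the weaker conclusion. Since the copula axioms imply that any copula is $1$-Lipschitz in each argument, hence uniformly continuous, continuity of the marginals makes each $\Range\Phi_i$ dense in $[0,1]$, so the values on the dense product set determine $\copula$ on all of $[0,1]^d$, yielding global uniqueness. The hard part is squarely the discontinuous case: in the probabilistic route the delicate step is verifying that the randomized transform produces \emph{exactly} uniform margins at the jumps (and the almost-sure inversion $\Phi_i\inv(U_i)=X_i$), while in the analytic route the corresponding obstacle is showing that the interpolated extension remains $d$-non-decreasing, i.e.\ is genuinely a copula and not merely an interpolant.
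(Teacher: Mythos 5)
The paper does not prove \cref{thm:sklar} at all: it is quoted as a classical result attributed to \cite{sklar59-copula} (see also \cite{nelsen07boo-introToCopulas}), and the paper only ever uses the existence half (in \cref{eq:copula-based-risk} and in the proof of \cref{thm:risk-bound}), including uniqueness only ``for the sake of completeness.'' So there is no in-paper argument to compare yours against, and your proposal must stand on its own. It does: what you describe is the standard modern proof. The continuous case via $\copula(u_1,\ldots,u_d):=\Phi\bigl(\Phi_1\inv(u_1),\ldots,\Phi_d\inv(u_d)\bigr)$ is sound, and you correctly isolate the two facts that make the substitution $u_i=\Phi_i(x_i)$ legitimate: the identity $\Phi_i\circ\Phi_i\inv=\mathrm{id}$ on $(0,1)$ (which needs continuity) and the lemma that a joint CDF is insensitive to replacing an argument by another with the same marginal value (proved by changing one coordinate at a time and bounding each change by the corresponding marginal increment, which is zero). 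The general case via the randomized transform $U_i=\Phi_i(X_i^-)+V_i\bigl(\Phi_i(X_i)-\Phi_i(X_i^-)\bigr)$ is R\"uschendorf's distributional-transform proof, and your uniqueness argument (values pinned on $\Range\Phi_1\times\cdots\times\Range\Phi_d$ by \cref{eq:sklar}, then density of the ranges plus the $1$-Lipschitz property that any copula inherits from \cref{def:copula}) is also the textbook one; in fact, for continuous $\Phi_i$ the range of $\Phi_i$ over $[-\infty,+\infty]$ is all of $[0,1]$, so the Lipschitz/density step can be skipped entirely.

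The only respect in which this falls short of a complete proof is that the two load-bearing properties of the distributional transform --- exact uniformity of $U_i$ and the almost-sure inversion $\Phi_i\inv(U_i)=X_i$ --- are asserted rather than verified; you flag them as the delicate steps, which is accurate, but they are the crux of the discontinuous case, and a full write-up would have to prove them along with the Galois equivalence $\Phi_i\inv(u)\le x \iff u\le\Phi_i(x)$ that you use to conclude. This is an acknowledged gap in detail, not a wrong turn: the claims are true and standard, and the argument goes through exactly as you outline.
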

The importance of copulas in studying multivariate distribution is emphasized by Sklar's theorem, which shows, firstly, that all multivariate distribution can be expressed in terms of copulas, and secondly, that copulas may be used to construct multivariate distribution functions from univariate ones.
The latter point is particularly important for us because, as we noted in the previous section, we cannot sample from the joint distribution $\Pr(A,B)$, but we can sample from the marginals $\Pr(A)$ and $\Pr(B)$.


\subsection{Estimating $p$-RSR using Copula}


Let $A$, $B$ be random variables drawn from $\phi_A$ and $\phi_B$ with CDFs $\Phi_A$ and $\Phi_B$, respectively (notation introduced in~\cref{def:relative-scenario-risk}); as a quick reminder, $\phi_A$ is the cost distribution of the \perceived scene and $\phi_B$ of the \plausible scene. 
Let's assume for a moment that we can estimate the copula relating $\phi_A$ and $\phi_B$.
Since the copula $\copula(A,B)$ contains the information on the dependence structure between $(A,B)$, we can use it to measure the tail dependency between the two distributions.
Hence, using the definition of conditional probability and \cref{eq:sklar} from \cref{thm:sklar}, we can express $p$-RSR in \cref{eq:RSR} as follows:\footnote{For notational brevity, we are dropping the distributions from which the random variables $A$ and $B$ are drawn from under the probability sign $\Pr$.}

\begin{equation}\label{eq:copula-based-risk}
\begin{aligned}
  \riskfcn(p) & = \Pr({B > \Phi_A\inv(p) \mid A \leq \Phi_A\inv(p)}) \\
    &= 1 - \Pr(B \leq \Phi_A\inv(p) \mid A \leq \Phi_A\inv(p))\\ 
    &= 1- \frac{\Pr(A \leq \Phi_A\inv(p), B \leq \Phi_A\inv(p))}{\Pr(A \leq \Phi_A\inv(p))} \\ 
    &= 1- \frac{\copula(\Phi_A\circ \Phi_A\inv(p), \Phi_B\circ\Phi_A\inv(p))}{\Phi_A\circ\Phi_A\inv(p)} \\
    &= 1- \frac{\copula(p, \Phi_B\circ\Phi_A\inv(p))}{p}.
\end{aligned}
\end{equation}
Unfortunately, we do not have access to the explicit expression of the two CDFs $\Phi_A$ and $\Phi_B$ and the copula $\copula$, therefore, $\riskfcn$ cannot be computed analytically.
In what follows, we will provably bound $\riskfcn$ by constructing empirical estimates $\Phi_A\at{n}$ and $\Phi_B\at{n}$ of the CDFs $\Phi_A$ and $\Phi_B$, respectively, with $n$ i.i.d. samples from both, $\phi_A$ and $\phi_B$. 



\begin{theorem}[PAC bound on $p$-RSR]\label{thm:risk-bound}
  Let $\{A_i\}_{i=1}^n$ and $\{B_i\}_{i=1}^n$ be $n$ i.i.d. samples from CDFs $\Phi_A$ and $\Phi_B$, respectively. Let 
  \begin{align*}
      \Phi_A\at{n}(A) = \frac{1}{n}\sum_{i=1}^{n} \ind[A_i\leq A], & & \Phi_B\at{n}(B) = \frac{1}{n}\sum_{i=1}^{n} \ind[B_i\leq B]
  \end{align*}
  be empirical estimates of $\Phi_A$ and $\Phi_B$, respectively. 
  Let the risk aversion parameter $p\in(0,1)$ be as described in \cref{sec:problem-formulation} and let $\alpha\in(0,1)$. 
  Then, with probability at least $1-\alpha$:
  
  \vspace{-3mm}
  \small
  \begin{equation*}
    1-\frac{\min\{p,\upbound{v}(p,\alpha,n )\}}{p} \leq \riskfcn(p) \leq 1 - \frac{\max\{p+\lowbound{v}(p,\alpha,n )-1,0\}}{p},
  \end{equation*}
  \normalsize
  where
  \begin{equation*}
    \begin{aligned}
      \lowbound{v}(p,\alpha,n) &= \Phi_B\at{n}\circ\left[\Phi_A\at{n}+\epsilon(\alpha,n)\right]\inv(p) -\epsilon(\alpha,n)\\
      \upbound{v}(p,\alpha,n) &= \Phi_B\at{n}\circ\left[\Phi_A\at{n}-\epsilon(\alpha,n)\right]\inv(p) + \epsilon(\alpha,n)
    \end{aligned}
  \end{equation*}
  and $\epsilon(\alpha,n) = \sqrt{\ln(2/\alpha)/{(2n)}}$.
  \begin{proof}
    See \cref{app:proofs}.
  \end{proof}
\end{theorem}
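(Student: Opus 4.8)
The plan is to begin from the closed-form copula expression for $\riskfcn(p)$ already derived in \cref{eq:copula-based-risk}, namely $\riskfcn(p) = 1 - \copula(p, v(p))/p$ with the shorthand $v(p) := \Phi_B\circ\Phi_A\inv(p)$, and then to bound its two unavailable ingredients separately: the copula $\copula$ and the composite CDF value $v(p)$. For the copula I would invoke the Fr\'echet--Hoeffding bounds $\max\{p+v-1,0\} \leq \copula(p,v) \leq \min\{p,v\}$, which follow directly from the copula axioms in \cref{def:copula} (the boundary and marginal conditions together with the $d$-non-decreasing property). Since $\copula(p,\cdot)$ is non-decreasing, substituting these two bounds into the expression for $\riskfcn(p)$ produces a sandwich purely in terms of the \emph{true} value $v(p)$: $1 - \min\{p,v(p)\}/p \leq \riskfcn(p) \leq 1 - \max\{p+v(p)-1,0\}/p$.

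It then remains to replace the unknown $v(p)$ by computable high-probability bounds $\lowbound{v} \leq v(p) \leq \upbound{v}$. For this I would use the Dvoretzky--Kiefer--Wolfowitz (DKW) inequality, $\Pr(\sup_c |\Phi\at{n}(c)-\Phi(c)| > \epsilon(\alpha,n)) \leq \alpha$ with $\epsilon(\alpha,n)=\sqrt{\ln(2/\alpha)/(2n)}$, which is precisely the $\epsilon$ appearing in the statement. Applying DKW to $\Phi_A$ yields the uniform confidence band $\Phi_A\at{n}-\epsilon \leq \Phi_A \leq \Phi_A\at{n}+\epsilon$, and applying it to $\Phi_B$ yields the analogous band for $\Phi_B$; I would work on the intersection of the two DKW good events, whose overall probability is controlled by a union bound.

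The technical core is propagating these uniform bands through the generalized inverse and the outer monotone CDF while tracking the direction of every inequality. The enabling fact is the order-reversing property of the generalized inverse in \cref{eq:phi-inv}: if $\Phi_A \geq g$ pointwise then $\Phi_A\inv(p) \leq g\inv(p)$, and symmetrically for the reverse inequality. For the upper estimate I would chain: $\Phi_A \geq \Phi_A\at{n}-\epsilon$ gives $\Phi_A\inv(p) \leq [\Phi_A\at{n}-\epsilon]\inv(p)$; monotonicity of $\Phi_B$ then gives $v(p) \leq \Phi_B([\Phi_A\at{n}-\epsilon]\inv(p))$; and $\Phi_B \leq \Phi_B\at{n}+\epsilon$ finally yields $v(p) \leq \Phi_B\at{n}\circ[\Phi_A\at{n}-\epsilon]\inv(p) + \epsilon = \upbound{v}$. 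The lower estimate $\lowbound{v} \leq v(p)$ follows from the mirror chain using $\Phi_A \leq \Phi_A\at{n}+\epsilon$ and $\Phi_B \geq \Phi_B\at{n}-\epsilon$. Because both envelopes in the Fr\'echet sandwich are non-increasing in $v$, feeding the upper estimate $v(p)\leq\upbound{v}$ into the lower envelope and the lower estimate $v(p)\geq\lowbound{v}$ into the upper envelope delivers the stated two-sided PAC bound.

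I expect the main obstacle to be the careful handling of the generalized inverse: getting the order-reversal direction right, accounting for the fact that $\Phi_A\at{n}\pm\epsilon$ is not itself a valid CDF (it may leave $[0,1]$, so its inverse must be read strictly through the infimum in \cref{eq:phi-inv}), and confirming that the compositions $\Phi_B\at{n}\circ[\Phi_A\at{n}\mp\epsilon]\inv$ are well defined and monotone in the intended sense. Tracking the probability budget across the two DKW applications is the other bookkeeping detail to settle. Once the confidence band for $v(p)$ is established, the Fr\'echet--Hoeffding step and the final substitution are routine.
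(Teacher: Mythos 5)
Your proposal is correct and follows essentially the same route as the paper's proof: the paper likewise combines the Fr\'echet--Hoeffding copula bounds with DKW confidence bands (its Lemma~1 is exactly your chain propagating the bands through the generalized inverse and the outer CDF), and then substitutes the resulting bounds on $v=\Phi_B\circ\Phi_A\inv(p)$ into $\riskfcn(p)=1-\copula(p,v)/p$. If anything, you are slightly more careful than the paper, which silently uses two DKW bands at level $\alpha$ each while claiming overall confidence $1-\alpha$ --- the union-bound bookkeeping you flag (requiring $\epsilon$ at level $\alpha/2$ per band, or a weaker joint confidence) is a real loose end in the published argument.
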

Both bounds are sharp in the sense that they can be attained.
In particular, the lower bound is attained when $\phi_A$ and $\phi_B$ 
are \emph{perfectly positively dependent} in the sense that $B$ is almost surely a strictly increasing function of $A$.
Conversely, the upper bound is attained when $\phi_A$ and $\phi_B$
are \emph{perfectly negatively dependent}, meaning that $B$ is almost surely a strictly decreasing function of $A$.

The PAC bounds in~\cref{thm:risk-bound} are tractable to compute and allow us to estimate $p$-RSR $\riskfcn(p)$ at runtime. 
In particular the assumption on i.i.d. samples is not restrictive: as we noted in~\cref{sec:problem-formulation}, our problem formulation allows i.i.d. samples.
Also we do not assume a particular copula, but only its existence, which is proved by Sklar's theorem.


\subsection{Triggering Safety Maneuvers} 

With the results presented in~\cref{thm:risk-bound}, we have a way to measure if, compared to the \perceived scene, the \plausible scene exposes the ego-vehicle to unwanted risk in terms of probability.
However, if the probability is low, it might be detrimental for the ego-vehicle's performance to trigger safety maneuvers to mitigate the risk.
In the following we design a detection algorithm (\cref{alg:cap}) that can be used to detect if the system is likely to be experiencing a high-risk situation, which can be used to trigger safety maneuvers.

Consider a \emph{risk threshold} $\riskth\in(0,1)$ that denotes high-risk situations. 
If the lower bound on $\riskfcn(p)$ in~\cref{thm:risk-bound} is above $\riskth$, it means that with probability at least $1-\alpha$, the current scene indeed corresponds to   a high-risk situation (in the sense of $p$-RSR). In such a case, \cref{alg:cap} detects a task-relevant failure (line~\ref{line:true}), which can be used to trigger a safety maneuver.
This reasoning can be easily extended to consider multiple thresholds for multiple criticality levels, each associated with different mitigation strategies or different driving scenarios (\eg highway, urban driving, pickup/drop-off, etc.). 

\begin{algorithm}
\caption{$p$-RSR Detection Algorithm}\label{alg:cap}
\begin{algorithmic}[1]
\Require The state $\statevar_{0:t}$, the faults $\faults$, the cost metric $c$, the risk aversion $p$, the confidence level $1-\alpha$, and the risk threshold $\riskth$.
\Ensure $\ltrue$ if critical scenario, $\lfalse$ otherwise.
\State $\{A_i\}_{i=1}^n\sim\phi(c|\statevar_{0:t}),\> \{B_i\}_{i=1}^n\sim\phi(c|\statevar_{0:t},\faults)$ \label{state:sampling}
\State $\Phi_A\at{n}(A) \gets \nicefrac{1}{n}\sum_{i=1}^{n} \ind[A_i\leq a]$ \label{state:cdf_est_A}
\State $\Phi_B\at{n}(B) \gets \nicefrac{1}{n}\sum_{i=1}^{n} \ind[B_i\leq b]$\label{state:cdf_est_B}
\If{$\min\{p,\upbound{v}(p,\alpha,n )\} < p(1-\riskth)$} \label{state:test}
  \State \Return $\ltrue$ \label{line:true}
\Else
    \State \Return $\lfalse$
\EndIf
\end{algorithmic}
\end{algorithm}

\begin{figure*}[t]
  \centering
    \includegraphics[width=\textwidth]{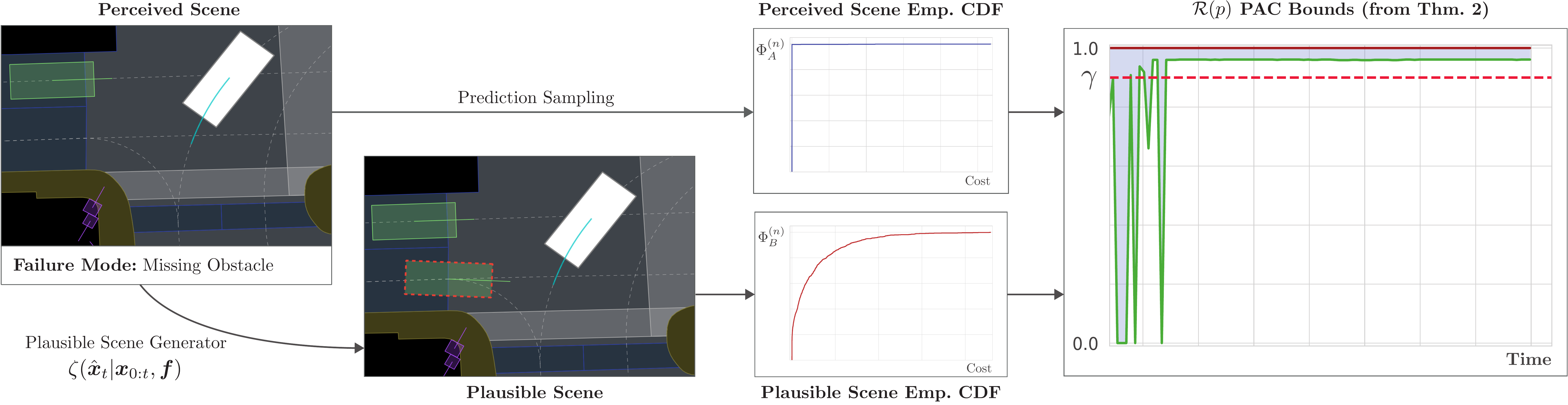}  
    \vspace{-6mm}
  \caption{
    {\bf Depiction of~\cref{alg:cap}.}
    The \perceived scene, which is subject to a missed-obstacle failure, is processed by the \hypgen which produces the \plausible scene.
    The two scenes are sampled and the empirical CDFs of the costs are estimated.
    The \perceived scene empirical CDF has a low risk since the only vehicle in the scene is stationary, since it is giving the ego vehicle the right-of-way.
    However, the \plausible scene has a higher risk since the ego vehicle is now in a collision path with a moving vehicle.
    The two CDFs are used to compute the PAC bounds in~\cref{thm:risk-bound}.
    The solid red line represent the upper bound, the solid green line the lower bound, while the dashed red line represent the risk threshold $\riskth$.
    Whenever the lower bound is above $\riskth$, the algorithm labels the scenario as high risk.
     \label{fig:algorithm}
     \vspace{-3mm}
  }
\end{figure*}

\Cref{alg:cap} can be summarized as follows:
after a failure is identified and the \plausible scene generated, the algorithm samples the two scenes (line~\ref{state:sampling}) and estimates the empirical CDFs (lines~\ref{state:cdf_est_A}-\ref{state:cdf_est_B}).
It then returns $\ltrue$ if the lower bound in~\cref{thm:risk-bound} is above the risk threshold $\riskth$, or $\lfalse$ otherwise (line~\ref{state:test}).
The algorithm steps are depicted in \cref{fig:algorithm}.

\begin{edited}
  The algorithm has four parameters: the risk aversion $p\in(0,1)$, the risk threshold $\riskth\in(0,1)$, the number of samples $n\in\Natural{}$, and the confidence level $1-\alpha\in(0,1)$.  
  The risk aversion $p$ measures the risk tolerance of the ego-vehicle, in terms of quantiles of the perceived scene risk distribution.
  Higher values indicate high risk tolerance, \ie the ego-vehicle is be expected to behave safely in riskier situations.
  Our risk metric $p$-RSR represents the probability that the plausible scene is riskier than the perceived scene.
  If this probability is significant, \ie above the risk threshold $\riskth$, the algorithm will classify the scene as high risk.
  To estimate $p$-RSR we use the PAC bounds in \cref{thm:risk-bound} requires the choice of a desired confidence level $1-\alpha$ and the number $n$ of predicted cost samples. 
  Clearly, $n$ should be as high as possible, but this is limited by the computational budget of the system.
  If the the values on $p$, $\riskth$, and $1-\alpha$ are close to $1$ the algorithm will be imprudent, \ie it will classify most of the scenes as low risk; on the other hand, if these values are small, the algorithm will be overly prudent, \ie it will classify most of the scenes as high risk.
  In our experiments, we found that the values of $p$, $\riskth$, and $1-\alpha$ in the range $[0.9, 0.99]$ provided a good trade-off between prudence and imprudence.
\end{edited}


\section{Experimental Results}

In this section, we compare the performance of our \taskaware risk estimator against various other baselines.
Our experiments were conducted on a desktop computer with an \texttt{Intel i9-10980XE} \SI{4.7}{\giga\hertz} CPU (36 cores) and an \texttt{NVIDIA GeForce RTX 3090} GPU.

\subsection{Dataset}
We tested the proposed approach using the publicly available NuPlan dataset~\cite{nuplan}.
To test the risk estimation we implement a fault injection mechanism into a NuPlan scenario.
We considered $4$ classes of failures, namely, \emph{Misdetection}, \emph{Missed Obstacle}, \emph{Ghost Obstacle}, and \emph{Mislocalization}.
Each of these classes is further divided into various subclasses.

\noindent\myParagraph{Misdetection} A misdetection represents an error in the estimation of one of the agents/objects around the vehicle. We consider:
\begin{inparaenum}[(i)]
    \item \emph{Orientation}: the ego perception system estimates the wrong orientation of the agent;
    \item \emph{Size}: the ego perception estimates the wrong size of an agent;
    \item \emph{Velocity}: the ego perception estimates the wrong velocity (both direction and/or magnitude) of the agent; and finally
    \item \emph{Traffic Light}: the ego perception estimates the wrong status for the traffic light.
\end{inparaenum}
All misdetection subtypes (except traffic light) are subject to noise, that can vary across scenarios.
For example, an orientation misdetection might offset the vehicle heading with a Gaussian distribution with mean $\pi/6$ and standard deviation of $0.1$.
\myParagraph{Ghost Obstacle} The ego perception system wrongly detects an obstacle that does not exist (i.e., a ghost obstacle). The ghost obstacle can be:
\begin{inparaenum}[(i)]
    \item \emph{in-path}, if it lays on the ego trajectory, or
    \item \emph{not in-path}, if it is not on the ego trajectory.
\end{inparaenum}
\myParagraph{Missing Obstacle} The ego perception system fails to detect an agent; the missed obstacle can be:
\begin{inparaenum}[(i)]
    \item \emph{in-path}, if it is on the ego trajectory, or
    \item \emph{not in-path}, if it is not on the ego trajectory.
\end{inparaenum}
\myParagraph{Mislocalization} The ego perception system fails to localize itself in the map.
Each failure mode can be:
\begin{inparaenum}[(i)]
    \item \emph{static} if the failure persists for the whole duration of a scenario (\SI{20}{\second}), or
    \item \emph{dynamic}, if it randomly appears/disappears over time.
\end{inparaenum}
In our experiments, a dynamic failure mode appears with probability $0.25$ and lasts at least \SI{1}{\second} before disappearing.

We manually designed $100$ realistic scenarios for evaluation, each with at least one common failure mode typically found in autonomous vehicles; see \cref{tab:scenarios} for a breakdown of the failures across scenarios.
Examples of such realistic scenarios include flickering detection of a pedestrian crossing the road, wrong orientation/velocity estimation of a vehicle with the right-of-way, misdetection of the traffic light with incoming traffic, etc.
For a detailed illustration of all the scenarios, please refer to the website \href{\datasetURL}{\datasetURL}.

\begin{table}
    \centering
    \caption{Scenarios}
    \label{tab:scenarios}
    \begin{tabular}{llcc}
        \toprule
        Failure Mode & Subtype & Static & Dynamic \\
        \midrule
        \multirow{2}{*}{Ghost Obstacle} & In-path & 5 & 5 \\
         & Not in-path & 10 & 10 \\
        \hline
        \multirow{2}{*}{Missing Obstacle} & In-path & 5 & 10 \\
          & Not in-path & 10 & 10 \\
        \hline
        \multirow{4}{*}{Misdetection} & Orientation & 10 & \\
         & Velocity & 10 & \\
         & Size & 5 & \\
         & Traffic Light & 5 & \\
        \hline
        Mislocalization & & 5 & \\
        \bottomrule
    \end{tabular}
\end{table}

\subsection{Implementation Details}
We implemented all the components of the proposed approach in Python.
As mentioned in \cref{sec:problem-formulation}, the proposed approach is planner agnostic.
In our experiments, we used the Intelligent-Driver Model (IDM) planner~\cite{treiber00-idm, albeaik22siam-limitationsIDM} provided in the NuPlan-devkit \cite{nuplan-devkit}.
The planner is designed to move towards the goal, following the lane, while avoiding collisions with the leading agent in front of the ego vehicle.
For the non-ego-prediction module, we instead used Trajectron++~\cite{salzmann20eccv-trajectron}.

To create high-risk situations, such as collisions, we use the closed-loop capability provided by NuPlan.
Closed-loop simulations enable the ego vehicle and other agents to deviate from what was originally recorded in the dataset by the expert driver.
In our simulations, each vehicle also behaves according to the IDM policy~\cite{treiber00-idm, albeaik22siam-limitationsIDM}.
However, due to a limitation of the NuPlan simulator, pedestrians and bicycles follow the original trajectory recorded in the dataset (open-loop).

\subsubsection{\HypGeneration}\label{sec:hyp-gen}
The primary goal of the experimental evaluation is to focus on the \taskrelevant risk-estimation. 
\edit{We use a \plausible scene generation method that, given the perception failure mode, proposes a \plausible scene by corrupting the ground-truth information (\ie velocity, size, orientation or location of the agents) with Gaussian noise.}
In particular, we add zero-mean Gaussian noise with standard deviation $0.2$m to the position, $0.1$rad to the heading, and $0.1$m/s to the velocity magnitude and direction.
This approach arises in the following common scenario.
Consider a perception system with two sensor modalities, \eg camera and radar, and a sensor fusion algorithm.
Suppose, without loss of generality, that the sensor fusion is misdetecting the \edit{velocity} of an agent due to a camera-based detection error, while the radar is fault-free.
\begin{edited}
  Once the fault detection and identification module recognizes the camera as the cause for the wrong \perceived scene, the \hypgen could use a Kalman filter to track the radar detections (non-failing sensor modality) to propose a plausible velocity of the vehicle.
  Since the Kalman filter produces a Gaussian estimate of the uncertainty, the velocity of the vehicle is also Gaussian. 
  This logic can be extended to other failure modes (\eg missing vehicle), and the \hypgen used in this paper emulates it.
\end{edited}
Generating \plausible scenes directly from a perception failure monitor and raw sensor data is beyond the scope of this paper; as discussed in \cref{sec:conclusion}, we will explore this topic further in our future work.

\subsubsection{Baselines}
We tested our approach against two baselines, one based on the Hamilton-Jacobi (HJ) reachability analysis~\cite{sever22-hjreachability} and another based on the collision probability.
\myParagraph{HJ Reachability} The core idea of HJ-Reachability is computing a set of target states that agents reason about either seeking or avoiding collision within a fixed time horizon.
There are two types of agent reactions in HJ-reachability, namely, \emph{collision-seeking} (min) and \emph{collision-avoiding} (max).
The idea behind the approach presented in~\cite{sever22-hjreachability} is to compensate for the lack of information about the perception failure by considering the conservative case in which both the agent and the ego vehicle are in a situation where the preferred actions are collision-seeking, namely the \emph{min-min} strategy.
For each agent in the scene, the HJ-reachability computes a value function (in our case based on the signed distance between the two bounding boxes), where its zero-sublevel set indicates the existence of a set of control inputs that lead to a collision.
The value function is pre-computed offline, and at runtime we perform look-ups, making this approach extremely fast.
Since there are multiple agents in the scene, we compute the value function for each agent and then we take the minimum value across the whole scene, if this value is smaller than zero, we say that the scene is high-risk.
\myParagraph{Collision Probability} The second baseline uses the trajectory prediction module to compute the probability of collision with any agents in the scene.
Analogous to our proposed approach, this baseline uses the \plausible scene to estimate the risk.
It samples the trajectories using the trajectory prediction module in both the \perceived scene and the \plausible scene, and if the collision probability in the \plausible scene is greater than the \perceived scene, and the former is above the threshold $\riskth$, we say that the scene is high-risk.
The key difference between our approach and the collision probability baseline is that the latter does not capture the dependency between the \perceived and the \plausible scene.
\begin{edited}
  Both baselines, Collision Probability and HJ-Reachability, use absolute risk thresholds that do not adapt to the scenarios. 
  In contrast, $p$-RSR measures the shift in the risk distribution due to the perceptual error between the perceived and plausible scenes, implicitly adapting the risk threshold to the scenario. 
  For example, suppose a correctly detected vehicle cuts into the ego vehicle's lane (high risk), but the speed of a distant cyclist (low risk) is underestimated.
  Both baselines consider this a high risk scenario because the correctly perceived vehicle is making a risky maneuver, even though the perception error is a low risk.
  Since the failure does not significantly shift the risk distribution, our approach correctly classifies it as low risk.
\end{edited}

\begin{table*}
  \centering
  \caption{Results}
  \label{tab:results}
  \resizebox{\textwidth}{!}{
  \begin{tabular}{cc|cccc|cc|cc}
    \toprule
    \multirow{2}{*}{Algorithm}                & \multirow{2}{*}{Parameters} & \multirow{2}{*}{F1 Score} & \multirow{2}{*}{Accuracy} & \multirow{2}{*}{Precision} & \multirow{2}{*}{Recall} & \multicolumn{2}{c|}{Alarm-to-Collision [\si{\second}]} & \multicolumn{2}{c}{Runtime [\si{\second}]}                                                 \\
                                              &                                &                           &                           &                            &                         & Average                                                & Median                                     & Average               & Median                \\
    \midrule
    \multirow{3}{*}{\shortstack[c]{Momentum-Shaped Distance\\(Proposed)}} & $p=0.90,\riskth=0.9,\alpha=0.1$                            & 0.70                      & 0.80                      & 0.55                       & \textbf{0.96}           & 5.28                                                   & 3.60                                       & \multirow{3}{*}{0.29} & \multirow{3}{*}{0.2}  \\
                                              & $p=0.95,\riskth=0.9,\alpha=0.1$                            & 0.79                      & 0.88                      & 0.68                       & \textbf{0.96}           & 5.18                                                   & 3.60                                       &                       &                       \\
                                              & $p=0.99,\riskth=0.9,\alpha=0.1$                            & \textbf{0.86}             & \textbf{0.93}             & \textbf{0.81}              & 0.92                    & 4.72                                                   & 3.03                                       &                       &                       \\
    \hline
    \multirow{3}{*}{\shortstack[c]{Time-To-Collision\\(Proposed)}}        & $p=0.90,\riskth=0.9,\alpha=0.1$                           & 0.70                      & 0.80                      & 0.55                       & \textbf{0.96}           & 4.61                                                   & 2.45                                       & \multirow{3}{*}{0.26} & \multirow{3}{*}{0.17} \\
                                              & $p=0.95,\riskth=0.9,\alpha=0.1$                           & 0.76                      & 0.86                      & 0.65                       & 0.92                    & 4.16                                                   & 2.05                                       &                       &                       \\
                                              & $p=0.99,\riskth=0.9,\alpha=0.1$                           & 0.79                      & 0.90                      & 0.79                       & 0.79                    & 4.13                                                   & 2.25                                       &                       &                       \\
    \hline
    \multirow{3}{*}{Collision Probability}    & $\riskth=0.90$                           & 0.40                      & 0.32                      & 0.26                       & \textbf{0.96}           & 6.53                                                   & 4.95                                       & \multirow{3}{*}{0.25} & \multirow{3}{*}{0.17} \\
                                              & $\riskth=0.95$                           & 0.41                      & 0.33                      & 0.26                       & \textbf{0.96}           & 6.31                                                   & 4.95                                       &                       &                       \\
                                              & $\riskth=0.99$                           & 0.43                      & 0.41                      & 0.28                       & 0.92                    & 4.33                                                   & 3.48                                       &                       &                       \\
    \hline
    HJ-Reachability                           & $\riskth=0$                           & 0.39                      & 0.28                      & 0.24                       & \textbf{0.96}           & \textbf{7.10}                                          & \textbf{5.75}                              & \textbf{0.01 }        & \textbf{0.01 }        \\
    \bottomrule
  \end{tabular}
  }
\end{table*}

\begin{table}[htbp]
\centering
\makegapedcells
\begin{tabular}{cc|cc}
  \multicolumn{2}{c}{}
   & \multicolumn{2}{c}{Predicted}                        \\
   &                               & High-Risk & Low-Risk \\
  \cline{2-4}
  \multirow{2}{*}{\rotatebox[origin=c]{90}{Actual}}
   & High-Risk                     & \shortstack{22                \\ (True Positive)}       & \shortstack{2 \\ False Negative}        \\
   & Low-Risk                      & \shortstack{5                \\ False Positive}         & \shortstack{71 \\ True Negative}       \\
  \cline{2-4}
\end{tabular}
\caption{Momentum-Shaped Distance Confusion Matrix}
\label{tab:msd_confusion_matrix}
\vspace*{-3em}
\end{table}

\subsection{Results}

In our experiments we use $n=1000$ samples from each scene, the confidence $1-\alpha$ and the risk threshold $\riskth$ are set to $0.9$.
We tested several values of risk aversion, namely, $p = \{0.9, 0.95, 0.99\}$, reported in~\cref{tab:results}.
We tested the time-to-collision~\cite{westhofen23-criticalityMetrics} and the Momentum-Shaped Distance cost metrics (described in \cref{app:cost_functions}) to assess the risk.
As mentioned before, the time-to-collision computes the time before a collision happens between two actors if their speeds and orientations remain the same.
The Momentum-Shaped Distance instead computes the distance between the bounding boxes of two actors, taking into account the relative velocity and orientation of the two actors.
The two metrics are described in greater details in~\cref{app:cost_functions}.
We consider a scenario to be high-risk if there is a collision.
This also allows us to compute the \emph{Alarm-To-Collision} metric, which measures the time between the first alarm raised by the perception monitor and the actual collision.


\cref{tab:results} reports the results averaged across the \num{100} scenarios. 
As mentioned above, we consider a scenario to be high-risk if there is a collision (ground-truth label),
and report the F1 score, precision, recall, accuracy, and the  \emph{Alarm-To-Collision} metric.
The proposed approach outperforms both baselines, \ie HJ-Reachability and collision probability, in terms of F1 Score, precision, recall, and accuracy.
In particular, our approach outperforms all others when using the Momentum-Shaped Distance with risk aversion $0.99$.
\begin{edited}
  Thus, while the baselines achieve similar performance on evidently risky situations (similar recall), our approach demonstrates greater finesse in subtle situations, increasing precision (and hence the F1 score). 
\end{edited}
Beside the relevant classification results, it anticipates the collision by an average of \SI{4.72}{\second}, giving enough time to the AV to take risk mitigation actions.
HJ-Reachability has the fastest runtime (recall that the value function is precomputed and, at runtime, the approach simply uses a lookup-table), but also the most conservative; it exhibits a high recall (on par with our approach) but the lowest precision, accuracy, and F1 Score.

\cref{tab:msd_confusion_matrix} reports the confusion matrix for the Momentum-Shaped Distance metric.
The table shows that our approach is able to detect both high-risk and low-risk scenarios reliably, with very few misclassifications.
It is worth noticing that several of the false positives result from situations in which there was a failure associated with an agent that is close to the ego vehicle, but the ego vehicle does not collide with it.


\myParagraph{Other Runtime Considerations}
The Momentum-Shaped Distance with risk aversion $0.99$ averages a runtime of \SI{0.29}{\second} and a median of \SI{0.2}{\second}.
The bottleneck of the proposed approach is the trajectory sampling, which is performed by the trajectory prediction module, in our case Trajectron++~\cite{salzmann20eccv-trajectron}.
From \cref{fig:timing} we can see that the trajectory prediction module takes \SI{0.22}{\second} on average, roughly \SI{75}{\percent} of the total runtime.
This limitation can be easily overcome by using a faster trajectory prediction module, such as PredictionNet~\cite{kamenev22icra-predictionnet}, which is two orders of magnitude faster than Trajectron++~\cite{kamenev22icra-predictionnet}.
Moreover, the proposed approach can be easily parallelized, as the cost computation can be computed in parallel for each agent in the scene and the two scenes can be batched into a single query for the prediction network.
With the suggested implementation improvements, we expect to achieve significantly faster runtimes.


\begin{figure*}
  \centering
  \subfigure{\includegraphics[width=59mm]{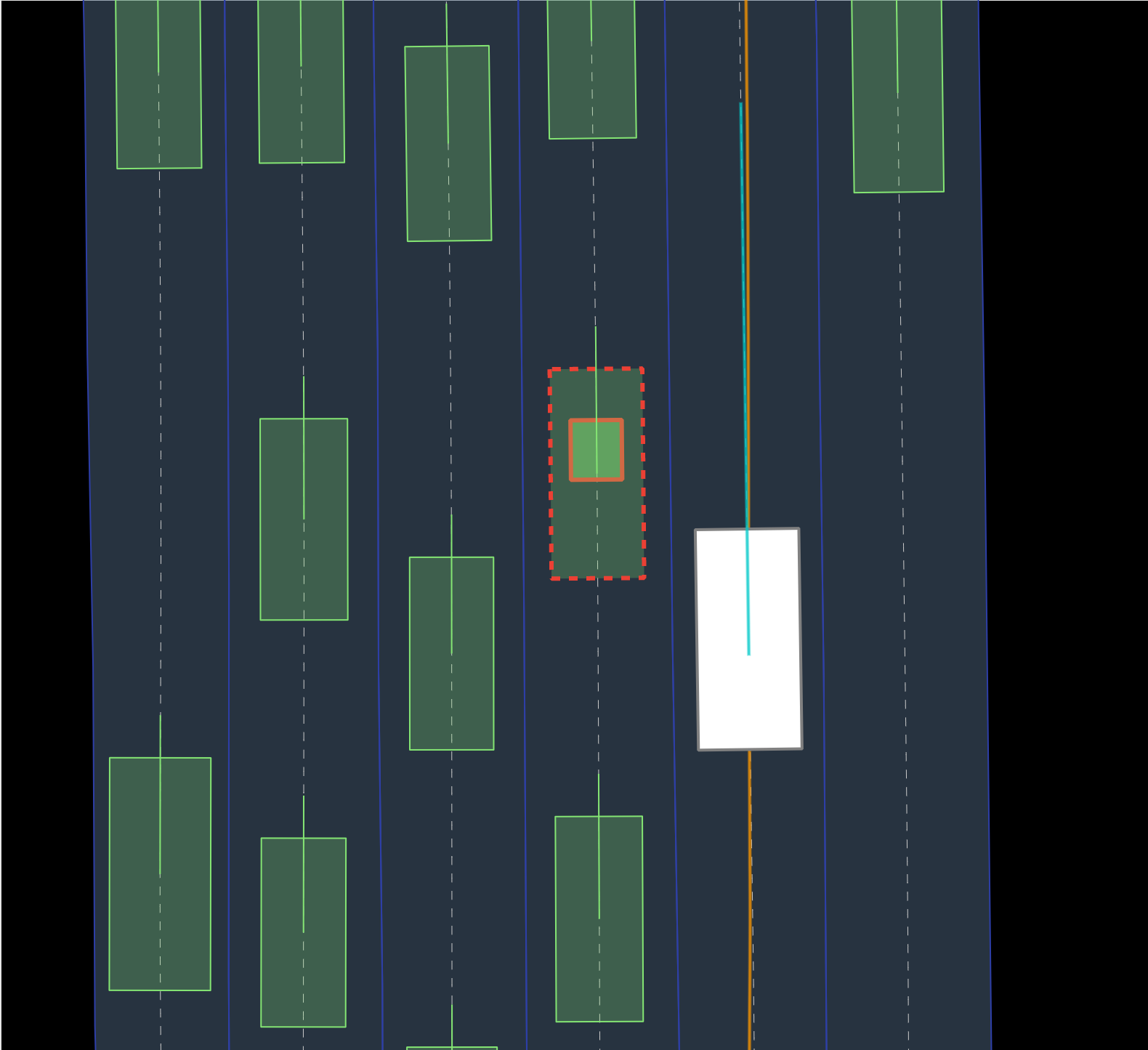}}
  \subfigure{\includegraphics[width=59mm]{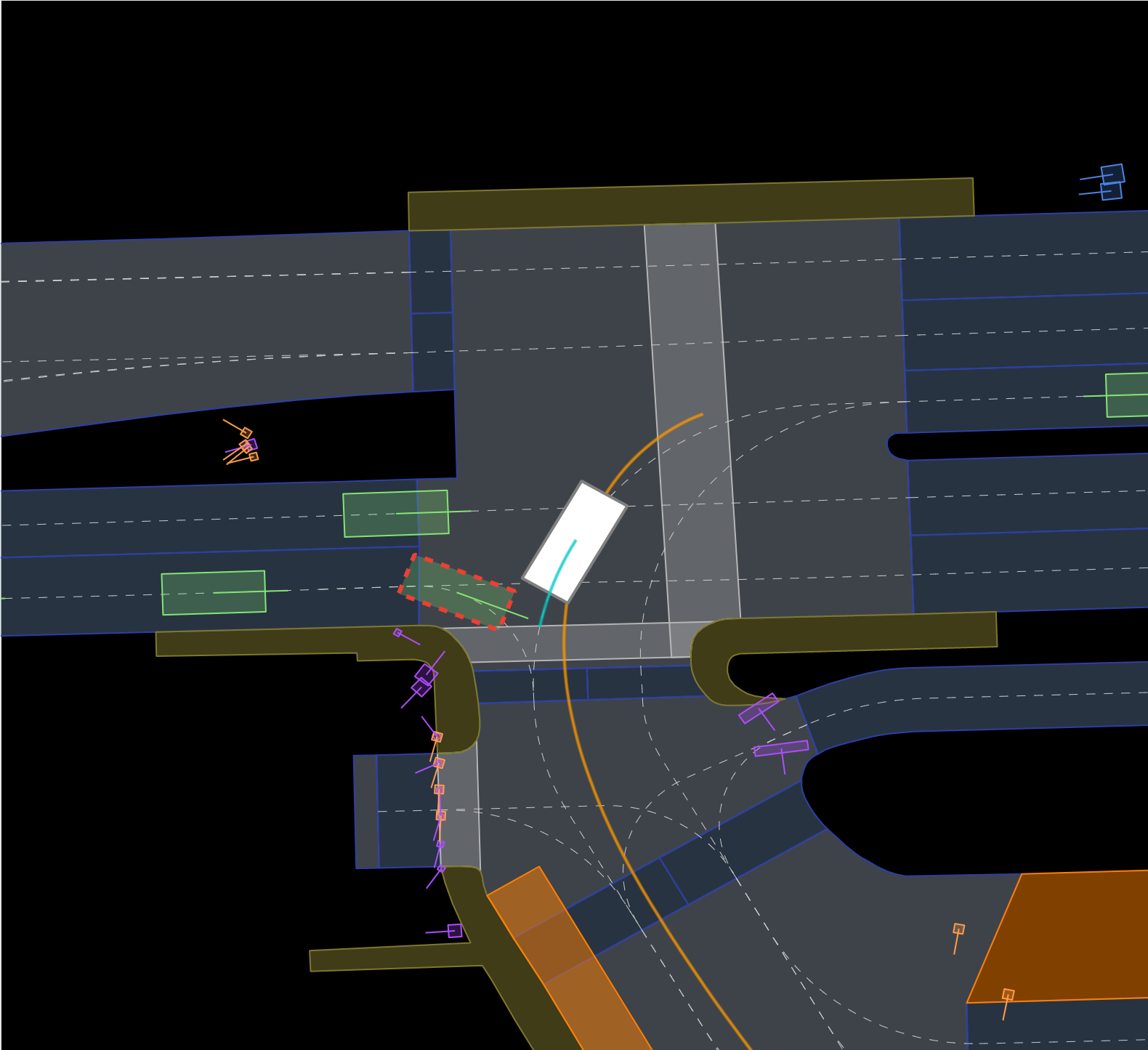}}
  \subfigure{\includegraphics[width=59mm]{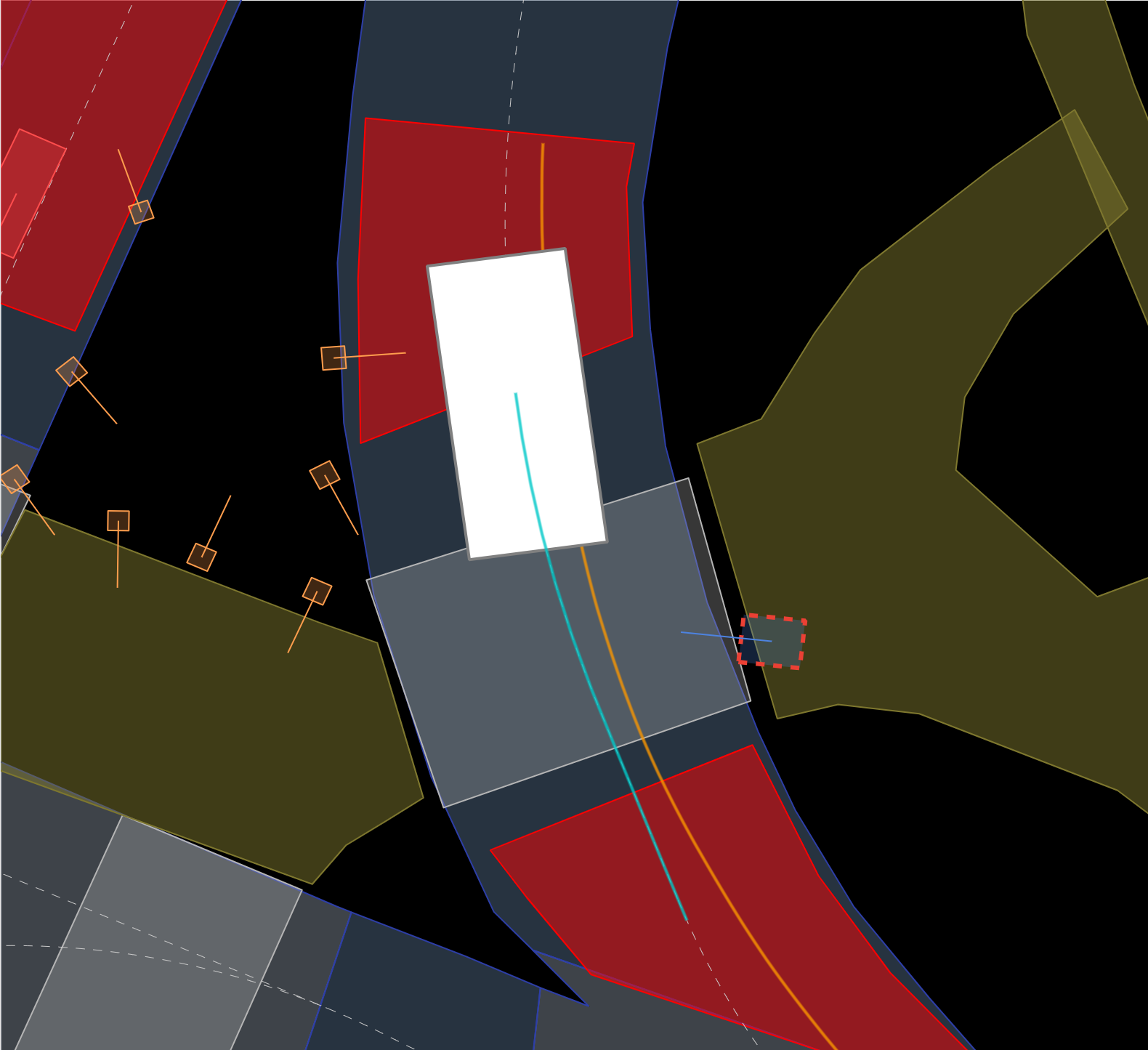}}
  \setcounter{subfigure}{0}
  \subfigure[Misdetection (Size)]{\label{fig:risk_d}\includegraphics[width=59mm]{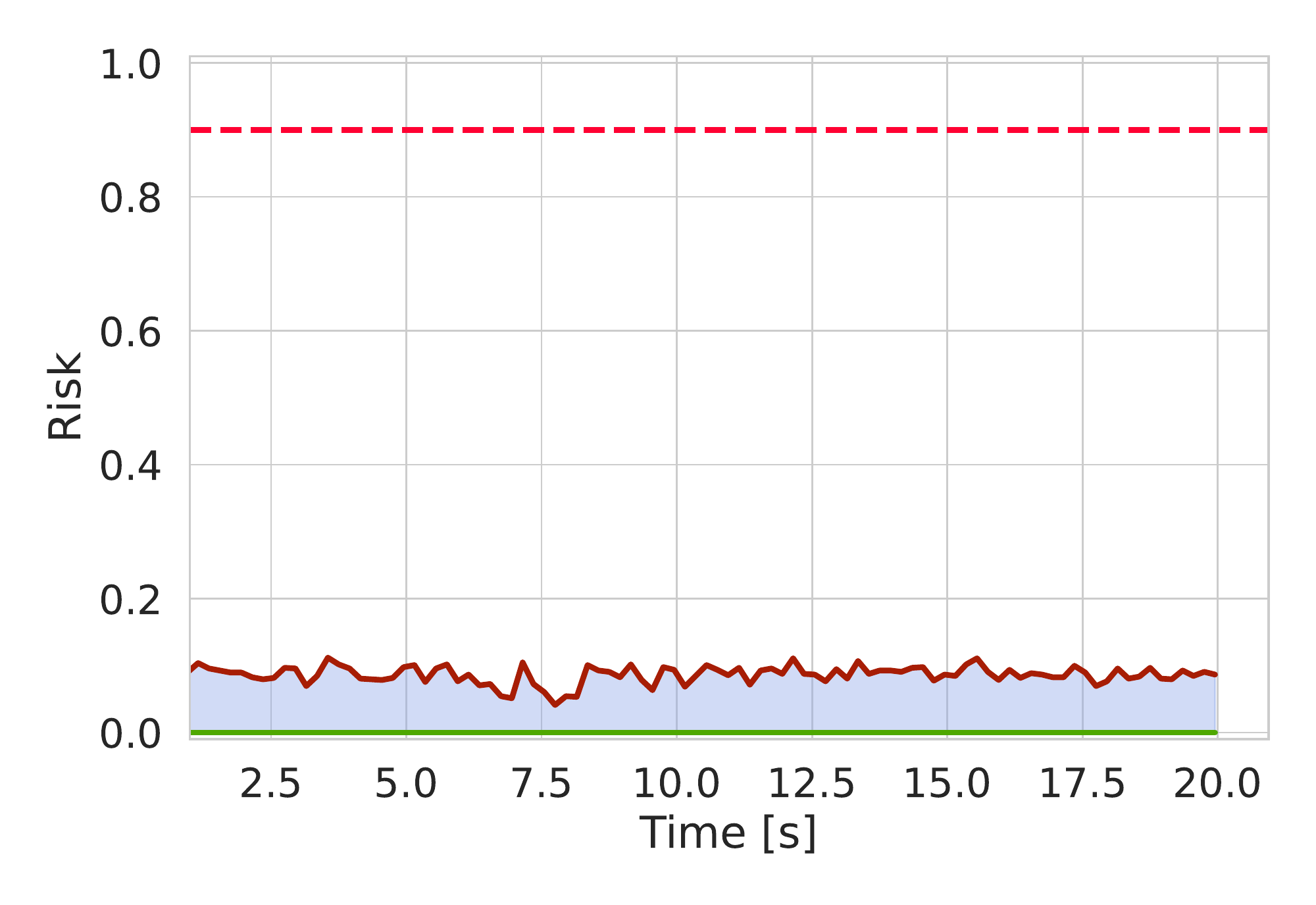}}
  \subfigure[Missing Obstacle]{\label{fig:risk_e}\includegraphics[width=59mm]{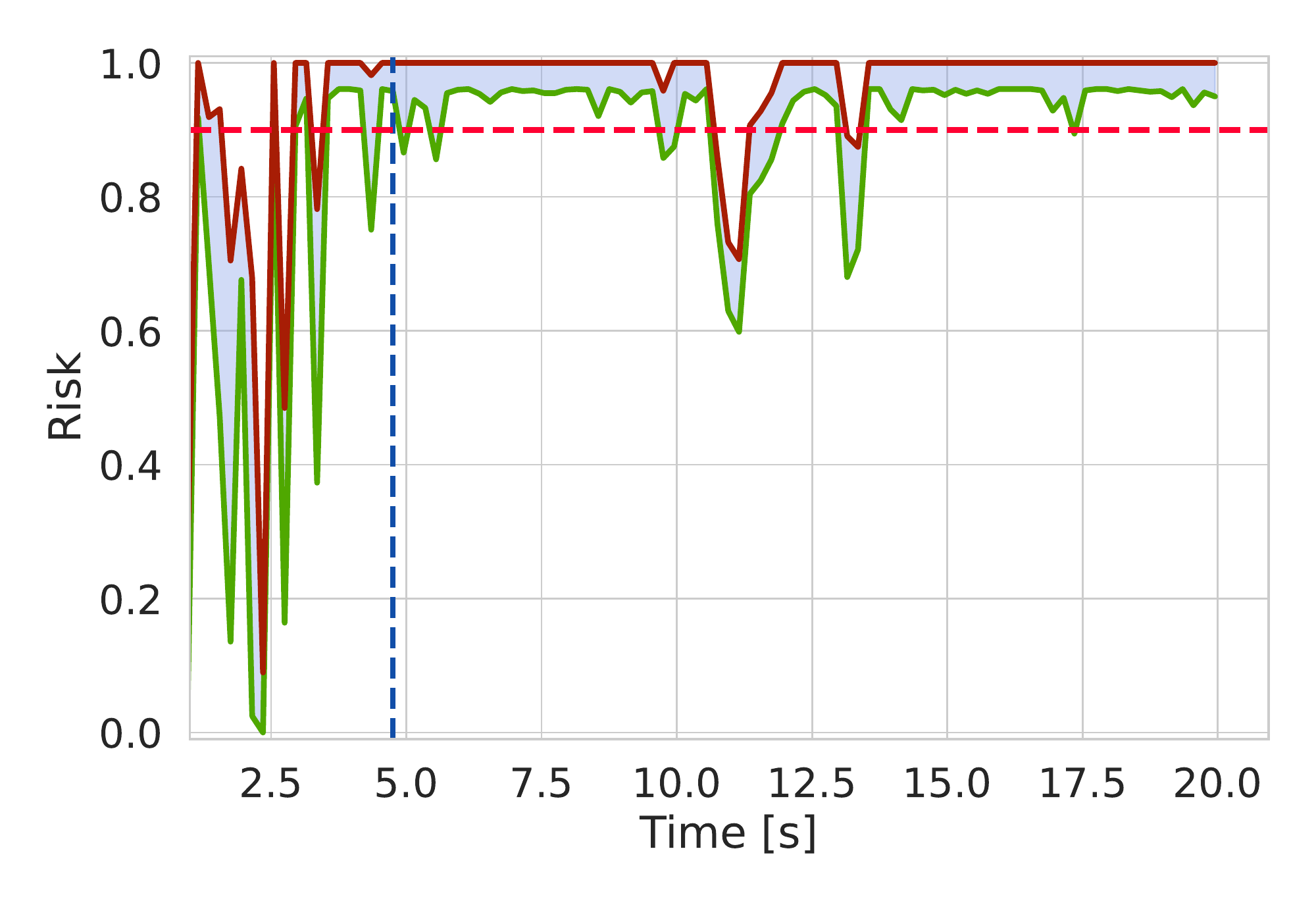}}
  \subfigure[Missing Pedestrian]{\label{fig:risk_f}\includegraphics[width=59mm]{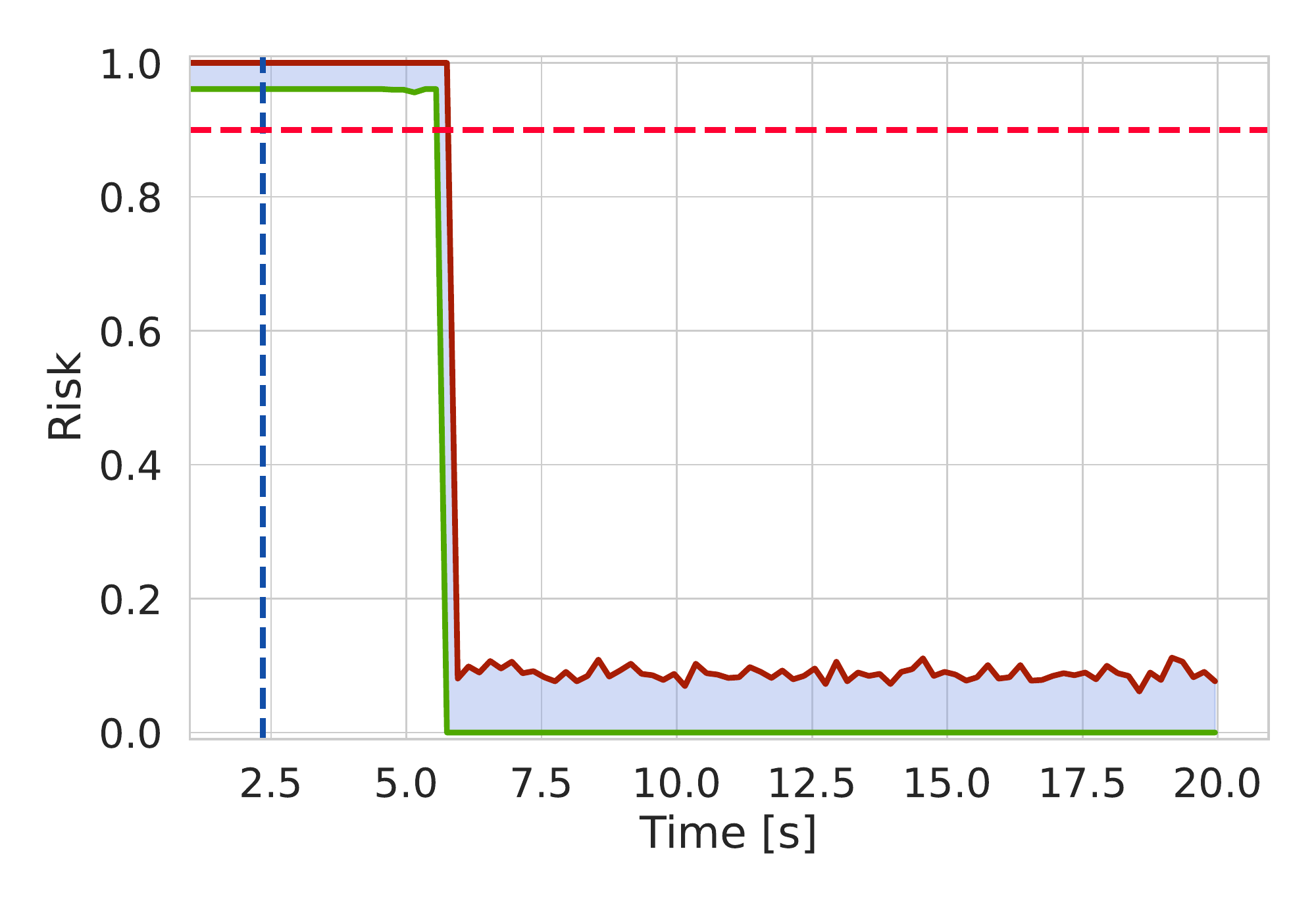}}
  \caption{
    {\bf Examples of scenarios and the associated estimated risk.}
    The top row shows the scenario, where the ego vehicle is represented as a white box, other vehicles as green boxes, and pedestrian as blue boxes.
    A dashed red line indicates the ground truth position and size of an agent, a solid line instead the one \perceived by the ego perception system.
    The bottom row shows the estimated risk for the corresponding scenario in the top row. 
    The horizontal dashed line represent the risk threshold $\riskth$.
    The red solid line represents the risk upper bound while the green line represent the risk lower bound.
    The vertical dashed blue line represents the time of the collision.
    It is worth noting that in our simulations, the behavior of the ego vehicle and the non-ego agents does not change after a collision, \ie the simulation continues running until the end of the scenario.
  }
  \label{fig:examples}
\end{figure*}

\begin{figure*}
  \centering
  \subfigure[Prediction Runtime]{\label{fig:timing_prediction}\includegraphics[width=55mm]{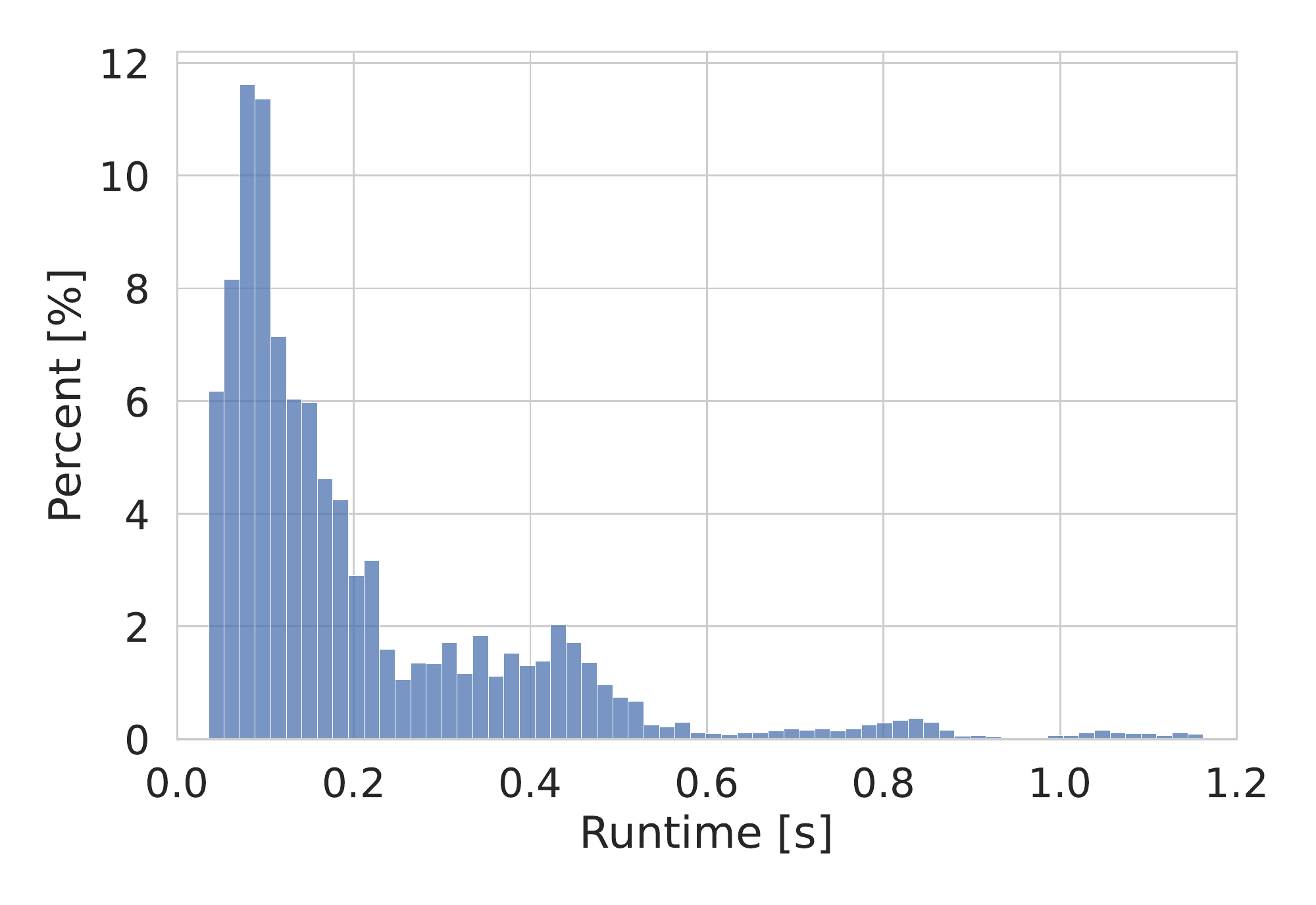}}
  \subfigure[Risk Estimation Runtime]{\label{fig:timing_risk}\includegraphics[width=55mm]{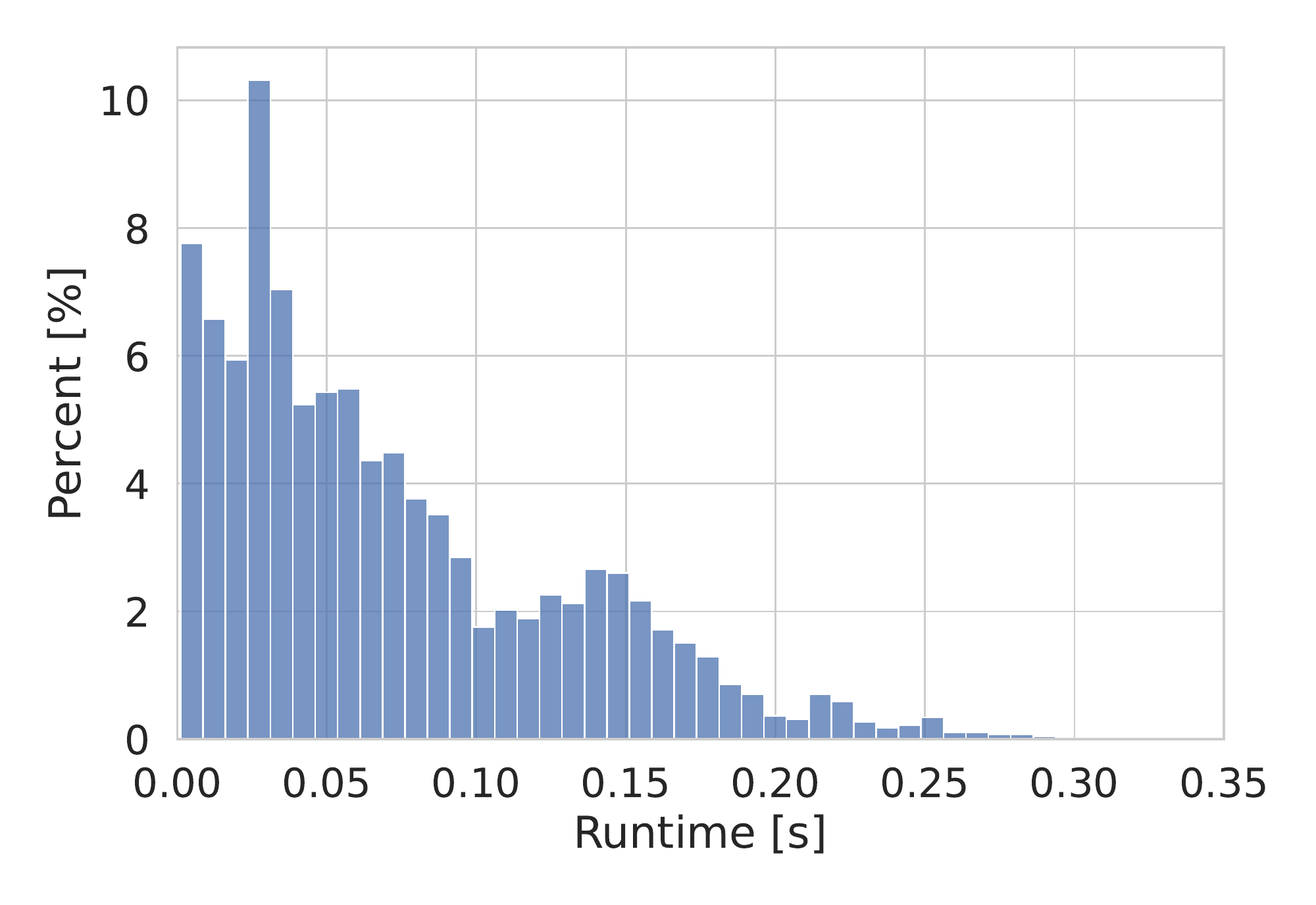}}
  \subfigure[Total Runtime]{\label{fig:timing_total}\includegraphics[width=55mm]{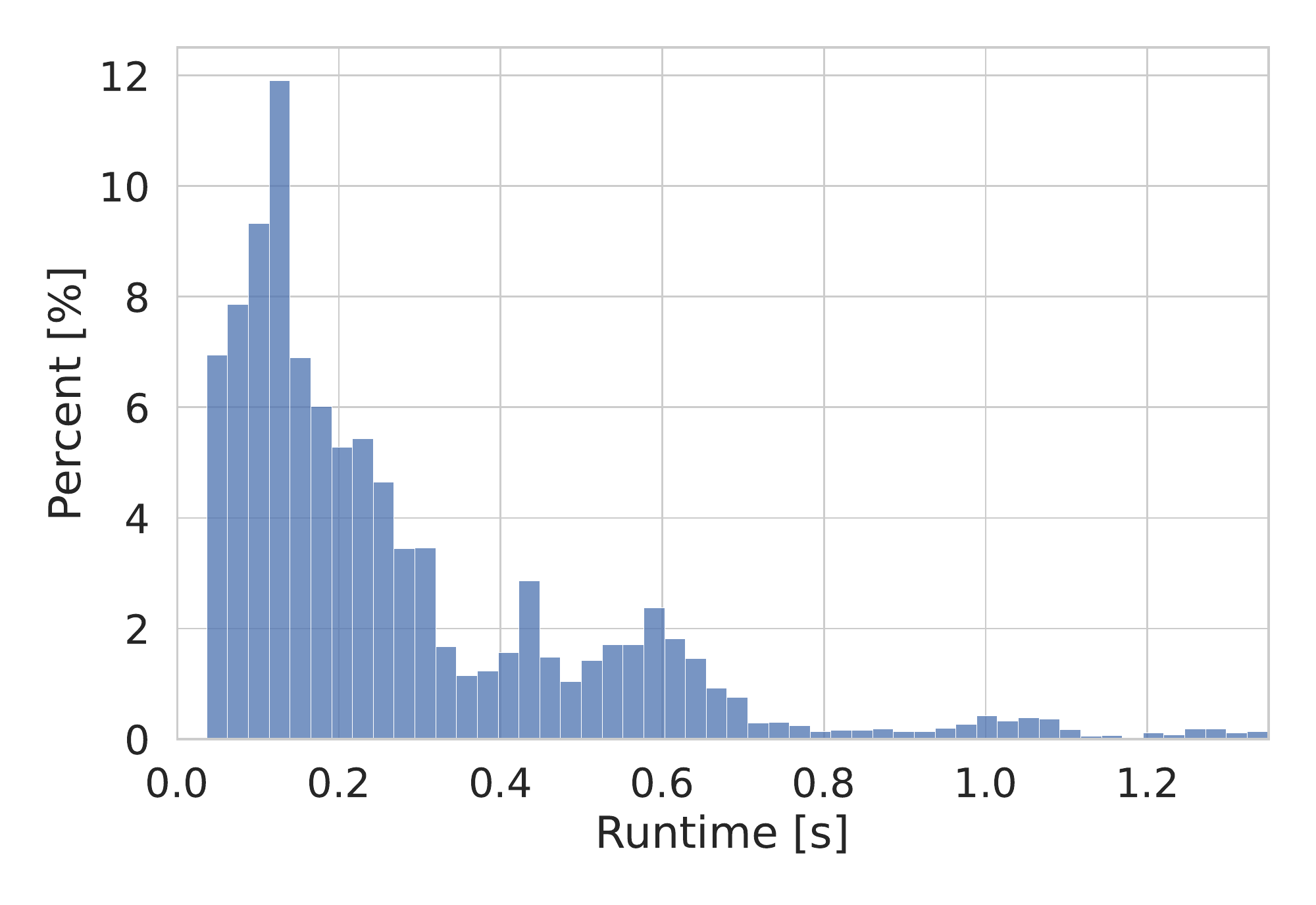}}
  \caption{
    {\bf Timing breakdown for the proposed approach using Momentum-Shaped Distance.}
    The prediction runtime averages at \SI{0.22}{\second} (median runtime: \SI{0.14}{\second}).
    The risk estimation runtime averages at \SI{0.07}{\second} (median runtime: \SI{0.06}{\second}).
    The total runtime averages at \SI{0.29}{\second} (median runtime: \SI{0.2}{\second}).
  }
  \label{fig:timing}
\end{figure*}


\section{Conclusions and Future Work} 
\label{sec:conclusion}

In this paper, we presented a framework to assess the risk a perception failure poses to the AV's motion plan. We achieved this by first identifying the perception failure mode, followed by synthesizing \plausible alternatives for the current scene, and then assessing how much more risk the AV faces in the \plausible scene as compared to the \perceived one. We formalized the notion of \taskaware risk as the 
\emph{$p$-quantile relative scenario risk} 
and then developed an algorithm to estimate it using i.i.d. samples. Additionally, we provide PAC bounds for our risk estimate which ensures the correctness of our algorithm with high probability. Finally, experimental evaluation of our approach revealed that our detector outperforms the baselines in terms of precision, recall, accuracy, and F1 score.

As part of our future work, we will develop the entire integrated task-aware perception monitor with each of the three building blocks (perception failure detection and identification module, \hypgen, task-aware risk estimator) and evaluate its closed-loop performance. 
We will also explore data-driven calibration and online adaptation of the risk aversion and the risk threshold parameters. 
Finally, we will work towards improving the runtime of our approach. As mentioned above, the computational bottleneck in our experiments arises from querying Trajectron++. We will use faster prediction networks, such as~\cite{kamenev22icra-predictionnet}, to speed up our computation.

\appendix

\subsection{Proof of \cref{thm:risk-bound}}\label{app:proofs}

Before proving~\cref{thm:risk-bound}, we introduce two useful results.
Let's start by noticing that the risk function in~\cref{eq:copula-based-risk} relies on the copula, and the CDFs of $A$ and $B$, namely $\Phi_A$ and $\Phi_B$.
Therefore, to provide the performance bound on the risk function $\riskfcn$, we first need to bound the copula value. To this end, we can use the well-known Fr\'echet–Hoeffding copula bounds.
\begin{theorem}[Fr\'echet–Hoeffding Theorem (\cite{nelsen07boo-introToCopulas}, Theorem 2.2.3)]\label{thm:Frechet_Hoeffding_copula}
  For any copula $\copula:[0,1]^d\to[0,1]$ and any $(u_1,\ldots,u_d)\in [0,1]^d$, the following bounds hold:
  \begin{equation}\label{eq:Frechet_Hoeffding_copula}
    W(u_1,\ldots,u_d) \leq \copula(u_1,\ldots,u_d) \leq M(u_1,\ldots,u_d),
  \end{equation}
  where
  \begin{equation*}
    \begin{aligned}
      W(u_1,\ldots,u_d) &= \max \left\{ 1-d+\sum_{i=1}^d u_i, 0 \right\}, \\
      M(u_1,\ldots,u_d) &= \min \left\{ u_1,\ldots, u_d\right\}.
    \end{aligned}
  \end{equation*}
\end{theorem}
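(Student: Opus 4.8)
The plan is to prove both inequalities by interpreting the copula $\copula$ as the joint cumulative distribution function of a random vector $\vu = (U_1,\ldots,U_d)$ whose components are each uniformly distributed on $[0,1]$. The three axioms of \cref{def:copula} are exactly the conditions that make $\copula$ a valid $d$-dimensional distribution function on the unit cube with uniform marginals: grounding (property 1) forces zero mass off the cube, the marginal condition (property 2) fixes each one-dimensional margin to be $U_i$ uniform on $[0,1]$, and the $d$-non-decreasing condition (property 3) guarantees the induced measure is non-negative. Hence we may write $\copula(u_1,\ldots,u_d) = \Pr(U_1 \le u_1,\ldots,U_d \le u_d)$ and reduce both bounds to elementary probability estimates.

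For the upper bound, I would observe that the joint event $\bigcap_{i=1}^d \{U_i \le u_i\}$ is contained in each individual event $\{U_i \le u_i\}$. Monotonicity of probability then gives $\copula(u_1,\ldots,u_d) \le \Pr(U_i \le u_i) = u_i$ for every $i \in \{1,\ldots,d\}$, since each marginal is uniform. Taking the minimum over $i$ yields $\copula(\vu) \le \min\{u_1,\ldots,u_d\} = M(\vu)$.

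For the lower bound, I would pass to complements and apply the union (Bonferroni) bound:
\begin{equation*}
  \copula(u_1,\ldots,u_d) = 1 - \Pr\Big(\bigcup_{i=1}^d \{U_i > u_i\}\Big) \ge 1 - \sum_{i=1}^d \Pr(U_i > u_i) = 1 - \sum_{i=1}^d (1-u_i) = 1 - d + \sum_{i=1}^d u_i .
\end{equation*}
Combining this with the trivial bound $\copula(\vu) \ge 0$ (non-negativity of probability, equivalently monotonicity together with grounding, since $\copula(\vu) \ge \copula(0,u_2,\ldots,u_d) = 0$) gives $\copula(\vu) \ge \max\{1-d+\sum_i u_i, 0\} = W(\vu)$, completing the proof.

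The step I expect to be the main obstacle is justifying the probabilistic interpretation at the level of rigor the statement deserves, i.e. that the copula axioms really do define a probability measure with uniform marginals; this is where one must invoke (the converse direction of) Sklar's theorem, or more directly the standard correspondence between grounded, $d$-non-decreasing functions with uniform margins and distribution functions. An alternative that avoids probability altogether is purely axiomatic: first derive componentwise monotonicity of $\copula$ by telescoping the non-negative $\copula$-volume of a thin box (grounding kills all but two vertices), then obtain the upper bound by raising every coordinate but one to $1$ and applying property 2. For the lower bound, the axiomatic route is clean only for $d=2$, where the non-negative $\copula$-volume of $[u_1,1]\times[u_2,1]$ expands to $1-u_1-u_2+\copula(u_1,u_2)\ge 0$; for $d\ge 3$ the volume of $\prod_i[u_i,1]$ contains intermediate terms $\copula(\vzz_S)$ with $2\le|S|\le d-1$ that are awkward to control, which is precisely why I would present the Bonferroni argument, which handles all dimensions uniformly.
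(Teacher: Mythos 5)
Your proof is correct, but there is nothing in the paper to compare it against: the paper imports \cref{thm:Frechet_Hoeffding_copula} verbatim from Nelsen's book as a known classical result and uses it as a black box in the appendix proof of \cref{thm:risk-bound}, so the real comparison is with the textbook argument. Your probabilistic route---reading $\copula$ as the joint CDF of a vector of uniform random variables, then obtaining the upper bound by monotonicity of probability and the lower bound by the Bonferroni inequality---is the standard ``second'' proof of Fr\'echet--Hoeffding and, as you say, handles all $d$ uniformly. Its one genuine debt, which you correctly flag rather than paper over, is the correspondence between the axioms of \cref{def:copula} and an honest probability measure with uniform marginals: one must check that a grounded, $d$-non-decreasing function with uniform margins induces a Lebesgue--Stieltjes measure on $[0,1]^d$. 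The right-continuity needed for that extension comes for free, since the axioms already imply the Lipschitz bound $|\copula(\vu)-\copula(\vv)| \le \sum_{i=1}^d |u_i - v_i|$; invoking the converse half of Sklar's theorem, as you suggest, is legitimate but heavier than necessary.

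Two refinements to your closing remarks. First, your claim that the axiomatic route is ``clean only for $d=2$'' is too pessimistic: the same Lipschitz property rescues it in all dimensions. Lowering one coordinate at a time from $1$ to $u_i$ and using single-coordinate $1$-Lipschitzness (which follows from non-negativity of the volume of slabs $\prod_{j\neq k}[0,1]\times[s,t]$ versus $\prod_{j\neq k}[0,u_j]\times[s,t]$) gives $\copula(\vu) \ge \copula(1,\ldots,1) - \sum_{i=1}^d (1-u_i) = 1-d+\sum_{i=1}^d u_i$ directly, with no mixed vertex terms $\copula(\vzz_S)$, $2\le |S| \le d-1$, ever appearing. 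Second, note that the paper only ever applies the theorem with $d=2$ (it bounds $\copula(p, \Phi_B\circ\Phi_A\inv(p))$ in the proof of \cref{thm:risk-bound}), where your own rectangle-volume argument on $[u_1,1]\times[u_2,1]$ is already complete and matches Nelsen's Theorem 2.2.3 exactly; the cited theorem number in the paper is in fact the bivariate statement, with the general-$d$ version appearing elsewhere in Nelsen, a citation imprecision worth being aware of though immaterial to the paper's use.
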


In this paper, we are interested in the bi-dimensional copula, so we can apply \cref{thm:Frechet_Hoeffding_copula} to a copula $\copula(p,v)$ and obtain:
\begin{equation*}
  \max \left\{ p+v-1, 0 \right\} \leq \copula(p,v) \leq \min \left\{ p,v\right\}.
\end{equation*}
In particular, for the risk estimation, we take $v=\Phi_B\circ\Phi_A\inv(p)$.
However, we do not have access to the explicit expression of the two CDFs $\Phi_A$ and $\Phi_B$; therefore, we need to estimate them empirically.
The following theorem provides estimation bounds on the empirical CDFs.
\begin{theorem}[Dvoretzky–Kiefer–Wolfowitz Confidence Interval~\cite{dvoretzky56-empCDFInequality,massart90-empCDFInequality}]\label{thm:cdf_confidence_interval}
Let $\Phi$ be the CDF of an unknown distribution, and let $\Phi\at{n}$ the empirical CDF computed using $n$ i.i.d. samples from $\Phi$, then, with probability at least $1-\alpha$,
\begin{equation}\label{eq:cdf_confidence}
  \Phi\at{n}(x) - \epsilon(n,\alpha) \leq \Phi(x) \leq \Phi\at{n}(x) + \epsilon(n,\alpha),
\end{equation}
where  $\epsilon(n, \alpha) = \sqrt{\ln(2/\alpha)/{(2n)}}$.
\end{theorem}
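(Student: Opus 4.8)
The plan is to recognize that the stated confidence interval is nothing but a reparametrization of the two-sided Dvoretzky--Kiefer--Wolfowitz--Massart concentration inequality, and then to sketch a proof of the latter. First I would note that the intended (and, for the downstream use in~\cref{thm:risk-bound}, necessary) reading of~\cref{eq:cdf_confidence} is that the two-sided bound holds \emph{for all} $x$ simultaneously with probability at least $1-\alpha$; this event is exactly $\{D_n \leq \epsilon(n,\alpha)\}$, where $D_n := \sup_{x}\,|\Phi\at{n}(x) - \Phi(x)|$ is the Kolmogorov--Smirnov statistic. Hence the claim is equivalent to $\Pr(D_n > \epsilon(n,\alpha)) \leq \alpha$. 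Substituting $\epsilon(n,\alpha) = \sqrt{\ln(2/\alpha)/(2n)}$ and rearranging, this is in turn equivalent to the bound $\Pr(D_n > \epsilon) \leq 2e^{-2n\epsilon^2}$ at the given value of $\epsilon$; so it suffices to establish this inequality with leading constant exactly $2$.

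Second, I would make the problem distribution-free via the probability integral transform. Writing $U_i := \Phi(X_i)$ for the underlying i.i.d. samples $X_i$, the $U_i$ are uniform on $[0,1]$ when $\Phi$ is continuous, and $D_n$ has the same law as the uniform discrepancy $\sup_{t\in[0,1]}|\hat{G}_n(t) - t|$, with $\hat{G}_n$ the empirical CDF of the $U_i$; for non-continuous $\Phi$ a standard coupling shows $D_n$ is stochastically no larger than in the uniform case, so the bound transfers. It therefore suffices to treat uniform data, where the statistic depends only on the order statistics $U_{(1)} \leq \cdots \leq U_{(n)}$. Next I would split the two-sided event by a union bound, $\Pr(D_n > \epsilon) \leq \Pr(D_n^{+} > \epsilon) + \Pr(D_n^{-} > \epsilon)$, with $D_n^{+} := \sup_x (\Phi\at{n}(x) - \Phi(x))$ and $D_n^{-} := \sup_x (\Phi(x) - \Phi\at{n}(x))$; the reflection $t \mapsto 1-t$ maps uniform data to uniform data and shows these two one-sided probabilities are equal, which is precisely the origin of the factor $2$ (and of the $2/\alpha$ inside the logarithm). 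Thus the whole theorem reduces to the sharp one-sided bound $\Pr(D_n^{+} > \epsilon) \leq e^{-2n\epsilon^2}$.

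The main obstacle is this last, one-sided inequality --- Massart's theorem --- and the difficulty lies entirely in the \emph{constant}. A soft bound with some unspecified $C$ in place of $1$ is comparatively easy: McDiarmid's inequality applies since $D_n^{+}$ changes by at most $1/n$ under a single-sample perturbation, which together with an $O(n^{-1/2})$ bound on $\E[D_n^{+}]$ (or a VC/symmetrization argument) yields exponential concentration, but with an exponent that falls short of leading constant $1$. To land the sharp value that makes $\epsilon(n,\alpha)$ equal $\sqrt{\ln(2/\alpha)/(2n)}$, I would follow Massart's route: using $D_n^{+} = \max_{1\leq k\leq n}(k/n - U_{(k)})$, rewrite the event as a boundary crossing, $\{D_n^{+} > \epsilon\} = \{\exists k:\ k/n - U_{(k)} > \epsilon\}$, reduce the crossing probability via the exchangeability of the uniform order statistics to a one-dimensional quantity, and bound it by a carefully optimized Chernoff-type argument in which every constant is tracked so that no slack accumulates. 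This precise constant-tracking is the delicate heart of the proof and the step I expect to absorb essentially all of the work.

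Finally I would record the admissibility range: the bound $2e^{-2n\epsilon^2}$ is informative (at most $1$) only when $\epsilon \geq \sqrt{\ln 2/(2n)}$, equivalently when $\alpha \leq 1$, so outside that regime the statement is vacuous and no separate argument is needed.
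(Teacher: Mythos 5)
The paper does not actually prove this statement---it is imported verbatim from the literature, with the proof delegated to the citations (Dvoretzky--Kiefer--Wolfowitz 1956 for the inequality, Massart 1990 for the sharp constant), so there is no in-paper argument to compare against. Your outline is a correct and faithful sketch of the canonical proof: the reparametrization $\Pr(D_n > \epsilon) \leq 2e^{-2n\epsilon^2}$ with $\epsilon = \sqrt{\ln(2/\alpha)/(2n)}$ indeed yields exactly $\alpha$; the reduction to uniform data via the probability integral transform (with stochastic domination handling non-continuous $\Phi$) is standard and sound; the union bound plus the reflection $t \mapsto 1-t$ correctly accounts for the factor $2$; and the identity $D_n^{+} = \max_k (k/n - U_{(k)})$ is the right starting point for Massart's boundary-crossing argument. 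Two points worth noting. First, you are right to insist on the uniform-in-$x$ reading of \cref{eq:cdf_confidence}: the pointwise statement as literally written would not suffice for \cref{thm:risk-bound}, since the downstream lemma composes the bounds through generalized inverses evaluated at data-dependent points; the sup-norm version is what DKW provides and what the paper implicitly uses. Second, your reduction to the one-sided bound $\Pr(D_n^{+} > \epsilon) \leq e^{-2n\epsilon^2}$ with constant $1$ is only valid for $\epsilon \geq \sqrt{\ln 2/(2n)}$ (Massart's condition), but as your final paragraph correctly observes, $\epsilon(n,\alpha) = \sqrt{\ln(2/\alpha)/(2n)}$ satisfies this automatically for $\alpha \in (0,1)$, so no gap arises. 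The one step you defer---Massart's constant-tracking Chernoff argument---is precisely the content of the cited reference, so your proposal terminates at the same external appeal the paper itself makes; as a blind reconstruction of a quoted classical theorem, it is complete at the appropriate level of granularity.
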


We use~\cref{thm:cdf_confidence_interval} to estlablish bounds on $\Phi_B\circ\Phi_A\inv(p)$ in the next lemma.

\begin{lemma}\label{thm:cdf_composition_bound}
  Let $A$, $B$ be two random variables with CDFs $\Phi_A$ and $\Phi_B$, respectively.
  Let $\Phi_A\at{n}$ and $\Phi_B\at{n}$ be the empirical CDFs estimated using $n$ i.i.d. samples from $\Phi_A$ and $\Phi_B$, respectively.
  Then, with probability at least $1-\alpha$ we have:
  \begin{equation*}
    \lowbound{v}(p,\alpha,n) \leq \Phi_B\circ\Phi_A\inv(p) \leq \upbound{v}(p,\alpha,n),
  \end{equation*}
  where
  \begin{equation*}
    \begin{aligned}
      \upbound{v}(p,\alpha,n) &= \Phi_B\at{n}\circ\left[\Phi_A\at{n}-\epsilon(\alpha,n)\right]\inv(p) +\epsilon(\alpha,n)\\
      \lowbound{v}(p,\alpha,n) &= \Phi_B\at{n}\circ\left[\Phi_A\at{n}+\epsilon(\alpha,n)\right]\inv(p) - \epsilon(\alpha,n)
    \end{aligned}
  \end{equation*}
  and $\epsilon(\alpha,n) = \sqrt{\ln(2/\alpha)/{(2n)}}$.
\end{lemma}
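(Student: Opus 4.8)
The plan is to bound $\Phi_B\circ\Phi_A\inv(p)$ by first sandwiching the \emph{true} quantile $q:=\Phi_A\inv(p)$ between two \emph{empirical} quantiles, and then transferring that sandwich through $\Phi_B$ using monotonicity together with the \cref{thm:cdf_confidence_interval} band on $\Phi_B$. The first thing to notice is that the composite inverses in $\lowbound{v}$ and $\upbound{v}$ are just shifted empirical quantiles: writing $\epsilon=\epsilon(\alpha,n)$, the definition of the generalized inverse in \cref{eq:phi-inv} gives $[\Phi_A\at{n}-\epsilon]\inv(p)=(\Phi_A\at{n})\inv(p+\epsilon)$ and $[\Phi_A\at{n}+\epsilon]\inv(p)=(\Phi_A\at{n})\inv(p-\epsilon)$. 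So the target is $\Phi_B\at{n}\!\big((\Phi_A\at{n})\inv(p-\epsilon)\big)-\epsilon\leq\Phi_B(q)\leq\Phi_B\at{n}\!\big((\Phi_A\at{n})\inv(p+\epsilon)\big)+\epsilon$.

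The crux is the quantile sandwich $(\Phi_A\at{n})\inv(p-\epsilon)\leq q\leq(\Phi_A\at{n})\inv(p+\epsilon)$, which I would establish on the \cref{thm:cdf_confidence_interval} event $\sup_x|\Phi_A(x)-\Phi_A\at{n}(x)|\leq\epsilon$. For the upper inequality, set $c^\star=(\Phi_A\at{n})\inv(p+\epsilon)$; right-continuity of the empirical CDF gives $\Phi_A\at{n}(c^\star)\geq p+\epsilon$, so the band yields $\Phi_A(c^\star)\geq\Phi_A\at{n}(c^\star)-\epsilon\geq p$, whence $c^\star$ lies in the defining set of $q=\Phi_A\inv(p)$ and therefore $q\leq c^\star$. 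For the lower inequality, right-continuity gives $\Phi_A(q)\geq p$, so the band gives $\Phi_A\at{n}(q)\geq\Phi_A(q)-\epsilon\geq p-\epsilon$, placing $q$ in the defining set of $(\Phi_A\at{n})\inv(p-\epsilon)$ and hence $(\Phi_A\at{n})\inv(p-\epsilon)\leq q$.

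With the sandwich in hand, monotonicity of $\Phi_B$ gives $\Phi_B\big((\Phi_A\at{n})\inv(p-\epsilon)\big)\leq\Phi_B(q)\leq\Phi_B\big((\Phi_A\at{n})\inv(p+\epsilon)\big)$. Invoking the \cref{thm:cdf_confidence_interval} band on $\Phi_B$ at these two endpoints (replacing $\Phi_B(\cdot)$ by $\Phi_B\at{n}(\cdot)-\epsilon$ from below and by $\Phi_B\at{n}(\cdot)+\epsilon$ from above) produces exactly $\lowbound{v}(p,\alpha,n)$ and $\upbound{v}(p,\alpha,n)$. The probability guarantee then follows by combining the two \cref{thm:cdf_confidence_interval} events, one for $\Phi_A$ and one for $\Phi_B$, each holding with high probability.

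The delicate part will be the interplay between the generalized inverse and the confidence bands, rather than any computation. In particular, one must verify that the endpoints $(\Phi_A\at{n})\inv(p\pm\epsilon)$ depend only on the $A$-samples, so that the uniform $\Phi_B$ band may be evaluated at these random points without circularity; and one must track the direction of each one-sided deviation so that the two-sided bands for $\Phi_A$ and $\Phi_B$ are what is actually needed. The two generalized-inverse facts that make every step go through are $\Phi(\Phi\inv(p))\geq p$ and the Galois relation $\Phi\inv(p)\leq c\iff p\leq\Phi(c)$ for right-continuous nondecreasing $\Phi$; stating these cleanly up front is the key to a short and correct argument, and one should also confirm the edge behavior for $p\in(0,1)$ is unaffected by truncation of $p\pm\epsilon$ outside $[0,1]$.
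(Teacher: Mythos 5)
Your proposal is correct and takes essentially the same route as the paper's proof---sandwich the true quantile $\Phi_A\inv(p)$ between inverses of the shifted empirical CDF via the \cref{thm:cdf_confidence_interval} band on $\Phi_A$, then transfer through $\Phi_B$ by monotonicity and the band on $\Phi_B$---and in fact supplies the generalized-inverse details that the paper compresses into ``we know that'' and ``similarly.'' The only caveat, which your write-up shares with the paper, is the probability bookkeeping: intersecting two DKW events each at level $1-\alpha$ yields confidence $1-2\alpha$ by the union bound, so matching the lemma's stated $1-\alpha$ would require running each band at level $1-\alpha/2$.
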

\begin{proof}
We are interested in the quantity $\Phi_B\circ\Phi_A\inv(p)$.
Notice that $\Phi_A\at{n}$ and $\Phi_B\at{n}$ are increasing functions.
From \cref{thm:cdf_confidence_interval} we know that $\lowbound{x} \leq \Phi_A\inv(p) \leq \upbound{x}$ (see~\cref{fig:cdf_bounds}), where
\begin{equation}\label{eq:cdf_composition_bound_step1}
  \begin{aligned}
    \upbound{x}(p,\alpha,n) &= \left[\Phi_A\at{n}-\epsilon(\alpha,n)\right]\inv(p)\\
    \lowbound{x}(p,\alpha,n) &= \left[\Phi_A\at{n}+\epsilon(\alpha,n)\right]\inv(p)
  \end{aligned}
\end{equation} 
Similarly, we have $\lowbound{v} \leq \Phi_B\circ\Phi_A\inv(p) \leq \upbound{v}$ where
\begin{equation}\label{eq:cdf_composition_bound_step2}
  \begin{aligned}
  \upbound{v}(p,\alpha,n) &= \Phi_B\at{n}\circ\upbound{x}(p,\alpha,n) +\epsilon(\alpha,n)\\
  \lowbound{v}(p,\alpha,n) &= \Phi_B\at{n}\circ\lowbound{x}(p,\alpha,n) -\epsilon(\alpha,n)
  \end{aligned}
\end{equation}
Substituting \cref{eq:cdf_composition_bound_step1} into \cref{eq:cdf_composition_bound_step2}  we complete the proof
\end{proof}

\begin{figure}[htbp]
  \centering
  \includegraphics[width=0.45\textwidth]{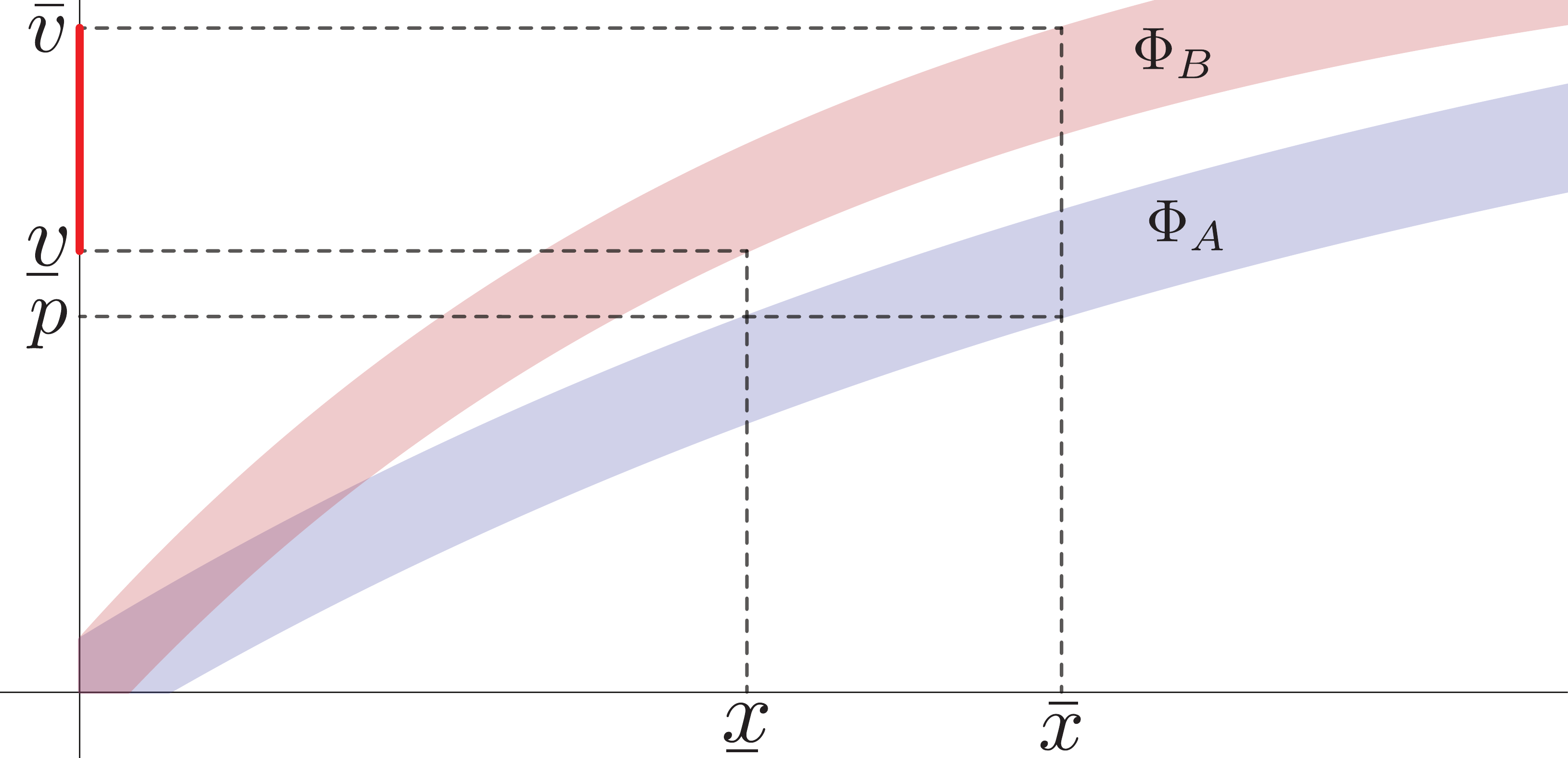}
  \caption{CDF composition bounds. 
  The shaded regions represent the the confidence intervals for the two CDFs, \ie $\Phi\at{n}(x)-\epsilon(n,\alpha)\leq\Phi(x)\leq\Phi\at{n}(x)+\epsilon(n,\alpha)$.
  The blue and orange regions represent $\Phi_A$ and $\Phi_B$ respectively.
  }
  \label{fig:cdf_bounds}
\end{figure}

We are now ready to prove \cref{thm:risk-bound}.

\begin{proof}[Proof of \cref{thm:risk-bound}]
    From \cref{thm:Frechet_Hoeffding_copula} we know that
    \begin{equation*}
      \max \left\{ p+v-1, 0 \right\} \leq \copula(p,v) \leq \min \left\{ p, v \right\},
    \end{equation*}
    where $v=\Phi_B\circ\Phi_A\inv(p)$. 
    We can use \cref{thm:cdf_composition_bound} to bound $v$, obtaining
    \begin{equation}\label{eq:copula-bound}
      \max(p+\lowbound{v}-1,0)\leq \copula(p, v) \leq \min(p,\upbound{v}).
    \end{equation}
    Since $\riskfcn(p)=1-\nicefrac{\copula(p,\Phi_B\circ\Phi_A\inv(p))}{p}$, we have $\copula(p,\Phi_B\circ\Phi_A\inv(p))=p\left(1-\riskfcn(p)\right)$.
    Substituting this into \cref{eq:copula-bound} completes the proof.
\end{proof}


\subsection{Cost Functions}\label{app:cost_functions}

Let $\statevar_e$, $\vv_e$ be the position and velocity of the ego vehicle.
Similarly, let $\statevar_a$, $\vv_a$ be the position and velocity of any non-ego agent.
Moreover, let $\nu$ be a term used to penalize the violation of traffic rules, such as driving on the wrong side of the road, or driving in the opposite direction of the traffic or crossing an intersection with a red traffic light.

The time-to-collision cost is defined as:
\begin{equation*}
  c_{\mathrm{TTC}} = 1 - \max_{a\in\mathrm{Agents}}\min\left\{\frac{\mathop{TTC}(x_e,x_a,v_e,v_a)}{m},1\right\} + \nu,
\end{equation*}
where $m$ represent a maximum value for the $\mathop{TTC}$ function, which outputs the time until a collision between the ego vehicle and another agent occurs (assuming constant velocity at their current heading direction), or is infinite if no collision occurs.
In our experiments we use $m=3$.
This cost function takes values in $[0,1]$, and higher values indicate smaller Time-To-Collisions, thus higher risk.

The Momentum Shape Distance is instead defined as:
{\small
\begin{align*}
  &c_{\mathrm{MSD}} = \max_{a\in\mathrm{Agents}} e^{\nicefrac{\epsilon\delta}{2}} + \nu, \\
  &\delta =  
    \left((\vxx_{a,\parallel}-\vxx_{e,\parallel}) (\vv_{a,\parallel} - \vv_{e,\parallel})\right)^2 +
    \left((\vxx_{a,\perp}-\vxx_{e,\perp}) (\vv_{a,\perp} - \vv_{e,\perp})\right)^2
\end{align*}}

\noindent
were we used the subscript $\parallel$ and $\perp$ to denote the projection of a vector along the parallel and perpendicular direction to the ego vehicle's heading, respectively.
The scaling factor $\epsilon$ weighs the importance of an agent: in our experiments we set $\epsilon = 0.5$ when the agent is a vehicle, and $\epsilon = 1$ when the agent is a pedestrian.

\bibliographystyle{plainnat}
\bibliography{references, spark}

\end{document}